\documentclass{article} 
\usepackage{iclr2027_conference,times}

\usepackage{amsmath,amsfonts,bm}

\def\eqref#1{equation~\ref{#1}}

\def\1{\bm{1}}

\DeclareMathAlphabet{\mathsfit}{\encodingdefault}{\sfdefault}{m}{sl}
\SetMathAlphabet{\mathsfit}{bold}{\encodingdefault}{\sfdefault}{bx}{n}

\usepackage[utf8]{inputenc} 
\usepackage[T1]{fontenc}    
\usepackage{hyperref}       
\usepackage{url}            
\usepackage{booktabs}       
\usepackage{amsfonts}       
\usepackage{nicefrac}       
\usepackage{microtype}      
\usepackage{xcolor}         
\usepackage{listings}

\usepackage{graphicx}
\usepackage{subcaption}
\usepackage{wrapfig}
\usepackage{booktabs,arydshln} 
\usepackage{bbding,multirow,colortbl}
\usepackage{comment}
\usepackage{enumitem,makecell}

\usepackage{algorithm,algpseudocode}

\newcommand{\Rmnum}[1]{\expandafter\@slowromancap\romannumeral #1@}

\usepackage{amsmath}
\usepackage{amssymb}
\usepackage{mathtools}
\usepackage{amsthm}

\usepackage{booktabs}
\usepackage{tabularx}
\usepackage{array}
\usepackage{makecell}
\usepackage{arydshln}
\usepackage{pifont}
\usepackage{xspace}

\usepackage{booktabs}
\usepackage{array}
\usepackage{pifont}
\usepackage{xcolor}
\usepackage{colortbl}
\usepackage{tikz}
\usepackage{graphicx}

\definecolor{lightpurple}{RGB}{239,234,247}
\definecolor{trajcolor}{RGB}{70,110,200}
\definecolor{stepcolor}{RGB}{40,150,90}
\definecolor{actioncolor}{RGB}{225,135,50}

\newcommand{\xmark}{\ding{55}}

\newtheorem{theorem}{Theorem}

\newtheorem{proposition}{Proposition}

\newcolumntype{Y}{>{\raggedright\arraybackslash}X}
\newcolumntype{P}[1]{>{\raggedright\arraybackslash}p{#1}}

\usepackage{etoc}

\usepackage{hyperref}
\usepackage{url}
\definecolor{cite_color}{RGB}{0,0,100}
\definecolor{link_color}{RGB}{0,0,100}
\definecolor{url_color}{RGB}{0,0,120}
\definecolor{emp_color}{RGB}{0,0,255}
\hypersetup{
 colorlinks,
 citecolor=cite_color,
 linkcolor=link_color,
 urlcolor=url_color
}

\newcommand{\ourmethod}{{\fontfamily{lmtt}\selectfont \textbf{SCALE}}\xspace}

\definecolor{tblheader}{gray}{0.95} 
\definecolor{symblue}{RGB}{25, 100, 180} 
\definecolor{symred}{RGB}{180, 50, 50}   
\definecolor{peach}{RGB}{255,218,185} 

\definecolor{bestbg}{RGB}{246,232,183}    
\definecolor{secondbg}{RGB}{226,216,238}  
\definecolor{headerbg}{RGB}{112,91,145}    
\definecolor{subheaderbg}{RGB}{235,229,242} 
\definecolor{subheadertext}{RGB}{0,0,0}   
\definecolor{sectionbg}{RGB}{242,242,242}
\definecolor{oursbg}{RGB}{238,244,253}

\usepackage[most]{tcolorbox}
\usepackage{xcolor}
\usepackage{varwidth}

\definecolor{lalapurple}{HTML}{7656A8}
\definecolor{lalapurpledeep}{HTML}{59417F}
\definecolor{lalagold}{HTML}{E1AD3D}
\definecolor{lalacream}{HTML}{FFF9ED}
\definecolor{lalatext}{HTML}{2F2933}
\definecolor{promptvar}{HTML}{B56A28}
\definecolor{prompttag}{HTML}{6F45A5}

\newtcolorbox{promptbox}[1][]{
    enhanced,
    breakable,
    colback=lalacream,
    colframe=lalapurple,
    colbacktitle=lalapurpledeep,
    coltitle=white,
    fonttitle=\bfseries,
    fontupper=\small,
    title=#1,
    boxrule=1.15pt,
    arc=3.5mm,
    outer arc=3.5mm,
    left=10pt,
    right=10pt,
    top=9pt,
    bottom=9pt,
    before skip=8pt,
    after skip=9pt,
    borderline north={1.6pt}{0pt}{lalagold},
}

\definecolor{caseenv}{HTML}{C96A58}
\definecolor{caseenvdeep}{HTML}{A94F43}
\definecolor{caseenvbg}{HTML}{FFF5F0}

\definecolor{caseagent}{HTML}{4E6F9E}
\definecolor{caseagentdeep}{HTML}{39577F}
\definecolor{caseagentbg}{HTML}{F3F7FC}

\definecolor{caseaction}{HTML}{355C8A}
\definecolor{casethink}{HTML}{355C8A}
\definecolor{caseoutput}{HTML}{A14F42}

\usepackage{tcolorbox}
\tcbuselibrary{breakable,skins}
\usepackage{changepage}

\newtcolorbox{caseenvbox}[1]{
    enhanced,
    breakable,
    colback=caseenvbg,
    colframe=caseenv,
    colbacktitle=caseenvdeep,
    coltitle=white,
    fonttitle=\bfseries\rmfamily,
    fontupper=\small\rmfamily,
    title={Environment (#1)},
    boxrule=1.05pt,
    arc=3.5mm,
    outer arc=3.5mm,
    left=10pt,
    right=10pt,
    top=8pt,
    bottom=8pt,
    before skip=8pt,
    after skip=8pt
}

\newtcolorbox{caseagentbox}[1]{
    enhanced,
    breakable,
    colback=caseagentbg,
    colframe=caseagent,
    colbacktitle=caseagentdeep,
    coltitle=white,
    fonttitle=\bfseries\rmfamily,
    fontupper=\small\rmfamily,
    title={Agent (#1)},
    boxrule=1.05pt,
    arc=3.5mm,
    outer arc=3.5mm,
    left=10pt,
    right=10pt,
    top=8pt,
    bottom=8pt,
    before skip=8pt,
    after skip=8pt
}

\newenvironment{envstep}[1]
{%
    \par
    \begin{adjustwidth}{0pt}{0.16\linewidth}
    \begin{caseenvbox}{#1}
}
{%
    \end{caseenvbox}
    \end{adjustwidth}
    \par
}

\newenvironment{agentstep}[1]
{%
    \par
    \begin{adjustwidth}{0.16\linewidth}{0pt}
    \begin{caseagentbox}{#1}
}
{%
    \end{caseagentbox}
    \end{adjustwidth}
    \par
}

\newcommand{\casethink}[1]{%
    {\color{casethink}\ttfamily <think>}%
    \,\textnormal{#1}\,%
    {\color{casethink}\ttfamily </think>}%
}

\newcommand{\caseaction}[1]{%
    {\color{caseaction}\ttfamily <action>}%
    \,\textnormal{#1}\,%
    {\color{caseaction}\ttfamily </action>}%
}

\newcommand{\toolopen}{%
    \textcolor{caseaction}{\texttt{<tool\_call>}}%
}

\newcommand{\toolclose}{%
    \textcolor{caseaction}{\texttt{</tool\_call>}}%
}

\newcommand{\best}[1]{\cellcolor{bestbg}\textbf{#1}}
\newcommand{\second}[1]{\cellcolor{secondbg}\underline{#1}}

\newcolumntype{L}[1]{>{\raggedright\arraybackslash}p{#1}}
\newcolumntype{C}[1]{>{\centering\arraybackslash}p{#1}}

\title{From Outcomes to Steps: Reliable Success-Aware Critic Learning for Long-Horizon Language Agents}

\author{
  Jiaxi Liu\textsuperscript{1}, 
  Shixiong Kai\textsuperscript{2},
  Zhentao Tang\textsuperscript{2},
  Yang Li\textsuperscript{1}, 
  Yuqi Cui\textsuperscript{2},
  Mingxuan Yuan\textsuperscript{2},
  Junchi Yan\textsuperscript{1,*}\\
  \normalfont\textsuperscript{1} Shanghai Jiao Tong University \\
  \normalfont\textsuperscript{2} Huawei Noah’s Ark Lab
}

\iclrfinalcopy 
\begin{document}

\maketitle
\lhead{Preprint.}

\begingroup
\renewcommand{\thefootnote}{\fnsymbol{footnote}}
\footnotetext[1]{Corresponding to Junchi Yan (\texttt{yanjunchi@sjtu.edu.cn}).}
\endgroup

\begin{abstract}

Reinforcement learning for long-horizon language agents requires identifying whether intermediate actions make eventual task success more or less likely when supervision is available only at trajectory level.
The main difficulty is that \textit{a critic with aggregate return-prediction accuracy may still fail to resolve the local value differences needed to assign credit to actions}, especially under sparse feedback and function approximation. Conventional critics estimate future return, but policy improvement also depends on knowing whether each intermediate decision makes task success more or less likely.
We introduce \ourmethod, an actor--critic framework that augments conventional return estimation with explicit prediction of eventual task success. At each interaction prefix, the critic predicts a return value and a \textbf{Success Value}, defined as the probability that the task will eventually be completed from that prefix. 
\ourmethod converts changes in Success Value into multi-step action credit through \textbf{Success-GAE} and modulates its contribution according to the estimated reliability of the predictor, while retaining the token-level PPO objective.
Across five agent benchmarks and three model backbones, \ourmethod improves task performance over the compared RL baselines in most settings, with gains of up to \textbf{4.1} percentage points in success rate. Separate critic-learning analyses show that \ourmethod also improves critic estimation and training stability.

\end{abstract}

\vspace{-4pt}
\section{Introduction}
\label{sec-introduction}
\vspace{-4pt}

\begin{figure*}[t]
    \centering
    \includegraphics[width=\textwidth]{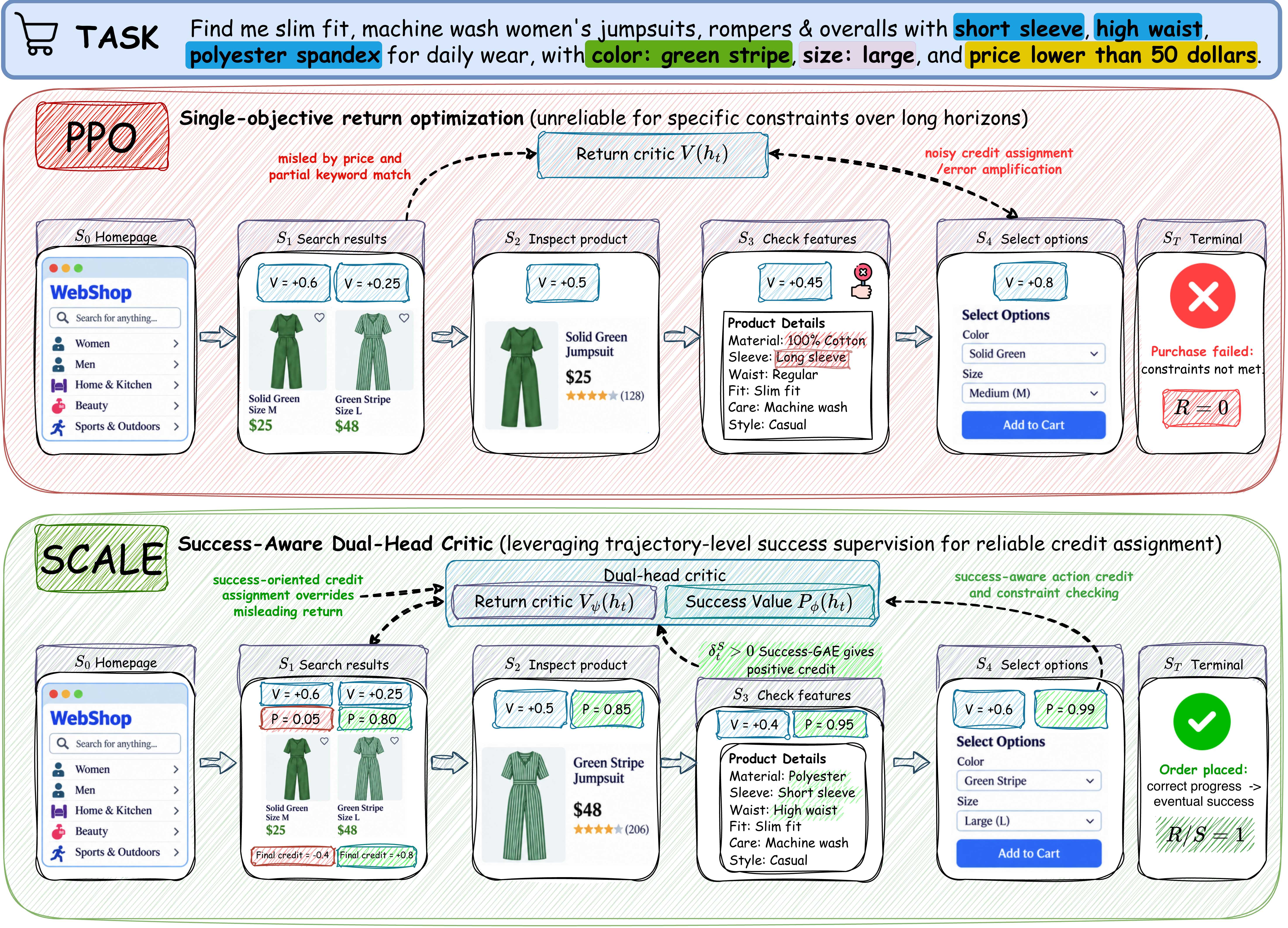}
    \caption{
    Comparison between PPO and \ourmethod. PPO derives step credit from changes in predicted return, which can be unreliable when feedback is sparse. An action may receive positive credit even when it lowers the probability of eventual success. \ourmethod instead uses changes in Success Value to measure whether each action makes task success more or less likely.
    }
    \vspace{-15pt}
    \label{fig:motivation}
\end{figure*}

Large language models (LLMs) are deployed as interactive agents that browse the web, use tools, and interact with environments over many steps \citep{yao2023react,shridhar2021alfworld,yao2022webshop}. Reinforcement learning (RL) provides a framework for improving such agents from task outcomes \citep{jin2025searchr1,feng2026retool,wang2025ragen,wang2026ragen2}, yet it remains difficult to tell which intermediate actions help or hurt the outcome. As illustrated in Fig.~\ref{fig:motivation}, an action can satisfy a local constraint yet make the task less likely to succeed. The learner therefore needs to determine which intermediate decisions improve the prospect of eventual task completion. For long-horizon language agents, this is difficult because supervision often arrives only after the entire interaction, while policy improvement depends on distinguishing the effects of individual decisions along the way.



Existing methods assign credit at different parts of the trajectory. Standard PPO and outcome-based methods optimize token probabilities from trajectory-level returns \citep{schulman2017ppo,ouyang2022instructgpt,shao2024deepseekmath}, while recent methods assign credit to individual turns or environment actions \citep{zeng2025multiturn,feng2025gigpo,li2025turnppo,wang2026capo,ding2026gtpo}. Critic-based methods further estimate intermediate values to propagate delayed outcomes \citep{zhou2024archer,kazemnejad2025vineppo,hou2026sao}. These methods assign credit at finer parts of the trajectory, but accurate prediction of future return does not necessarily imply reliable action-level credit. Under sparse supervision and function approximation, a change in predicted return may not tell us whether a particular action made eventual success more likely. The central issue is therefore not only where credit is assigned, but \textit{whether the critic preserves the local ordering between alternative actions} and \textit{whether its reliability can be assessed during training}. In Sec.~\ref{sec:preliminary_analysis}, we show that this concern is practical rather than hypothetical, conventional PPO critics produce action-level credit that degrades toward random chance at long horizons and remains unstable throughout training, leaving reliable critic learning for long-horizon credit assignment an open challenge.


To address these challenges, we introduce \ourmethod (\textbf{S}uccess-\textbf{C}alibrated \textbf{A}dvantage \textbf{L}earning and \textbf{E}stimation), a success-aware actor--critic framework for long-horizon language agents. \ourmethod makes eventual task completion an explicit prediction target alongside conventional return estimation, \textbf{rather than relying on return alone}. Specifically, the critic retains a return head for expected cumulative reward and adds a Success Value head that estimates the probability of satisfying the benchmark's terminal completion criterion from the current interaction prefix. It then uses changes in this prediction to assign credit to individual environment steps, reflecting whether each action improves eventual success. Because the reliability of this signal can vary during training, \ourmethod adjusts its contribution according to critic quality. The resulting credit complements the standard return-based advantage while retaining the token-level PPO objective. Sec.~\ref{sec-method} details the framework.

We evaluate \ourmethod on five representative agentic benchmarks spanning text-based interaction and tool use: ALFWorld \citep{shridhar2021alfworld}, WebShop \citep{yao2022webshop}, ScienceWorld \citep{wang2022scienceworld}, BFCL-v4 \citep{patil2025bfcl}, and AppWorld \citep{trivedi2024appworld}. Across these tasks, \ourmethod consistently \emph{outperforms} most strong agentic RL baselines and delivers substantial gains in task performance. Beyond end-task performance, We also analyze critic quality and the accuracy of its action credit to understand where these gains come from. Our contributions are threefold:

\vspace{-8pt}
\begin{itemize}[leftmargin=1.2em,itemsep=-0.2em]

\item[\ding{182}] \textbf{Local-credit failure of return critics.}
We show that conventional PPO critics become less reliable at distinguishing alternative actions as decisions move farther from terminal supervision.

\item[\ding{183}] \textbf{Success-Based Credit with Reliability Control.}
We introduce \ourmethod, which explicitly predicts eventual task success from \textbf{each interaction prefix} and \textbf{converts changes in this probability into multi-step action credit}. It then modulates the contribution of this credit according to \textbf{estimated critic reliability}, while retaining the standard token-level PPO objective.

\item[\ding{184}] \textbf{Empirical validation.}
Experiments across five agent benchmarks and three model backbones show improvements in task performance, including gains of up to \textbf{4.1} percentage points in success rate. Additional analyses show improved action ranking and support the roles of success-based credit and reliability control.

\end{itemize}
\vspace{4pt}

\vspace{-8pt}
\section{Related Work}
\label{sec-relate}
\vspace{-8pt}

\paragraph{Reinforcement Learning for LLM-based Agents.}
Reinforcement learning has evolved from preference alignment \citep{schulman2017ppo,ouyang2022instructgpt} toward optimization from verifiable task outcomes \citep{shao2024deepseekmath}, and has recently been extended to interactive language agents. Existing work studies RL for search and tool use \citep{jin2025searchr1,feng2026retool,qian2025toolrl}, multi-turn environment interaction \citep{wang2025ragen,wang2026ragen2}, and general agent-training frameworks \citep{feng2024agile,luo2025agentlightning}. Scalable RL infrastructures such as verl \citep{sheng2025hybridflow} and ROLL \citep{wang2025roll} support large-scale rollout generation and environment interaction. Other approaches improve long-horizon learning through planning and exploration \citep{paglieri2025learning,wang2026beacon,wang2026agentg2}. While these methods improve the training and exploration of language agents, a central challenge remains: \textit{how to convert sparse trajectory-level outcomes into reliable credit for intermediate actions}, particularly for decisions far from terminal supervision.

\vspace{-8pt}

\paragraph{Credit Assignment and Critic Learning.}
Existing approaches differ along two dimensions that are central to credit assignment: \emph{where} credit is assigned along a trajectory and \emph{how} that credit is estimated. Critic-free methods derive turn- or step-level signals through comparison, grouping, or structural information \citep{zeng2025multiturn,feng2025gigpo,he2026hgpo,wang2026beacon}, but often require repeated states, milestones, or other task-dependent structure. Monte Carlo methods estimate local action values from multiple continuations, avoiding a learned critic at the cost of substantial sampling overhead \citep{kazemnejad2025vineppo}. Critic-based methods instead propagate delayed outcomes through learned turn- or action-level values \citep{zhou2024archer,li2025turnppo,wang2026capo,tan2026hcapo}, but reliable local comparisons remain difficult when value estimates are learned from sparse supervision. Recent work further improves robustness by suppressing unreliable updates or controlling the use of uncertain credit signals \citep{li2026sorl,li2026ecpo,tao2026trace}. In contrast, \ourmethod explicitly predicts eventual task success from each interaction prefix and uses the estimated reliability of this prediction to control its contribution to environment-step credit.

\begin{table*}[t]
\centering
\small
\setlength{\tabcolsep}{5pt}
\renewcommand{\arraystretch}{1.1}

\caption{
Comparison of representative RL methods for credit assignment. ``Gran.'' denotes credit-assignment granularity. ``Critic'' indicates whether a learned critic is used. ``Reliab.'' denotes explicit reliability modeling or control, and ``Token PO'' indicates token-level policy optimization.
}
\label{tab:method_taxonomy}

\begin{tabular}{l c c c c l l}
\toprule
\textbf{Method}
& \textbf{Gran.} & \textbf{Critic} & \textbf{Reliab.} & \textbf{Token PO} & \textbf{Credit Signal} & \textbf{Extra Structure} \\
\midrule

PPO
& Step & \checkmark & \xmark & \checkmark & Discounted return & None \\

GRPO
& Trajectory & \xmark & \xmark & \checkmark & Group outcome & Group rollouts \\

GiGPO
& Step & \xmark & \xmark & \checkmark & Step relative adv. & Anchor states \\

VinePPO
& Step & \xmark & \xmark & \checkmark & Monte Carlo value & Vine branching \\

Turn-PPO
& Turn & \checkmark & \xmark & \xmark & Turn return & Turn MDP \\

CAPO
& Action & \checkmark & \xmark & \xmark & Action value & Action optimization \\

SORL
& Turn & \checkmark & \checkmark & \xmark & Turn-level adv. & Off-policy turns \\

\textbf{\ourmethod~(ours)}
& \textbf{Step} & \checkmark & \checkmark & \checkmark & \textbf{Return + Success} & \textbf{None} \\

\bottomrule
\end{tabular}
\vspace{-10pt}
\end{table*}

\vspace{-10pt}
\section{Preliminaries And Analysis}
\label{sec-preliminaries}
\vspace{-6pt}

\subsection{Problem Setup}
\label{sec:problem_setup}

We model language agent interaction as a partially observable Markov decision process (POMDP)
$\mathcal{M}=(\mathcal{S},\mathcal{A},\mathcal{O},\mathcal{P},\mathcal{R},\Omega,\gamma)$,
where $\mathcal{S}$, $\mathcal{A}$, and $\mathcal{O}$ are the state, action, and observation spaces; $\mathcal{P}$, $\mathcal{R}$, and $\Omega$ denote the transition kernel, reward function, and observation model; and $\gamma$ is the return discount factor.
At environment step $t$, the agent observes $o_t$ and conditions on the interaction history
$h_t=(o_0,a_0,r_0,\ldots,o_{t-1},a_{t-1},r_{t-1},o_t)$.
It generates action $a_t=(y_{t,1},\ldots,y_{t,L_t})$, where $y_{t,i}$ is token $i$, according to
$\pi_\theta(a_t\mid h_t)=\prod_{i=1}^{L_t}\pi_\theta(y_{t,i}\mid h_t,y_{t,<i})$.
The environment transitions after the complete action is executed. An episode of $T$ steps produces
$\tau=(o_0,a_0,r_0,\ldots,a_{T-1},r_{T-1},o_T)$.

Each benchmark defines a binary success outcome $z\in\{0,1\}$ under its evaluation protocol (Appendix~\ref{app:framework_details}). Unlike cumulative return, which aggregates trajectory rewards, $z$ indicates whether the task is completed. For prefix $h_t$, we define the probability of success as
$p_{\mathrm{succ}}^\pi(h_t)=\Pr_\pi(z=1\mid h_t)$.

Standard actor--critic methods \citep{konda2000actor,mnih2016asynchronous} estimate the expected discounted return
$V^\pi(h_t)
=
\mathbb{E}_{\pi}\!\left[
\sum_{k=t}^{T-1}\gamma^{k-t}r_k
\mid h_t
\right]$, with a learned critic $V_\psi(h_t)$. Let $d_t\in\{0,1\}$ indicate episode termination. Given the TD residual
$\delta_t^V=r_t+\gamma(1-d_t)V_\psi(h_{t+1})-V_\psi(h_t)$,
the conventional return-based advantage is computed using GAE \citep{schulman2015gae}:
{\small
\begin{equation}
\hat A_t^V
=
\sum_{l=0}^{T-t-1}
(\gamma\lambda)^l\delta_{t+l}^V,
\label{eq:return_gae}
\end{equation}
}
where $\lambda\in[0,1]$ is the GAE parameter. We refer to $\hat A_t^V$ as the return-based advantage. Our focus is whether these return-based local scores reliably rank alternative actions, rather than only predicting return accurately in aggregate.

\vspace{-6pt}
\subsection{Analysis}
\label{sec:preliminary_analysis}
\vspace{-6pt}

We first ask whether a PPO critic can provide reliable \emph{local} credit for long-horizon agent decisions. For policy improvement, the critic must do more than predict return: it must rank alternative actions correctly. This ranking is hard to learn from a limited number of rollouts when feedback is sparse, especially for decisions far from the terminal outcome\citep{li2025turnppo,kazemnejad2025vineppo}.

To test this directly, we start from the same interaction prefix, sample different candidate actions, and roll out each action several times. For each prefix $h_t$, we estimate a Monte Carlo state value $\hat V_{\mathrm{MC}}(h_t)=M^{-1}\sum_m G_t^{(m)}$. We further sample $K$ candidate actions and evaluate each with $M$ continuations, yielding $\hat Q_{\mathrm{MC}}^{R}(h_t,a_t^{(j)})=M^{-1}\sum_m G_t^{(j,m)}$. We measure critic estimation with explained variance and local credit quality with pairwise ordering accuracy between the critic score $\hat Q_V(h_t,a_t)=r_t+\gamma(1-d_t)V_\psi(h_{t+1})$ and the Monte Carlo action values. Prefixes are divided into \emph{Near}, \emph{Mid}, and \emph{Far} according to benchmark-specific tertiles of the remaining horizon $H_t=T-t$.

\begin{wrapfigure}{r}{0.55\columnwidth}
    \vspace{-20pt}
    \centering
    \includegraphics[width=\linewidth]{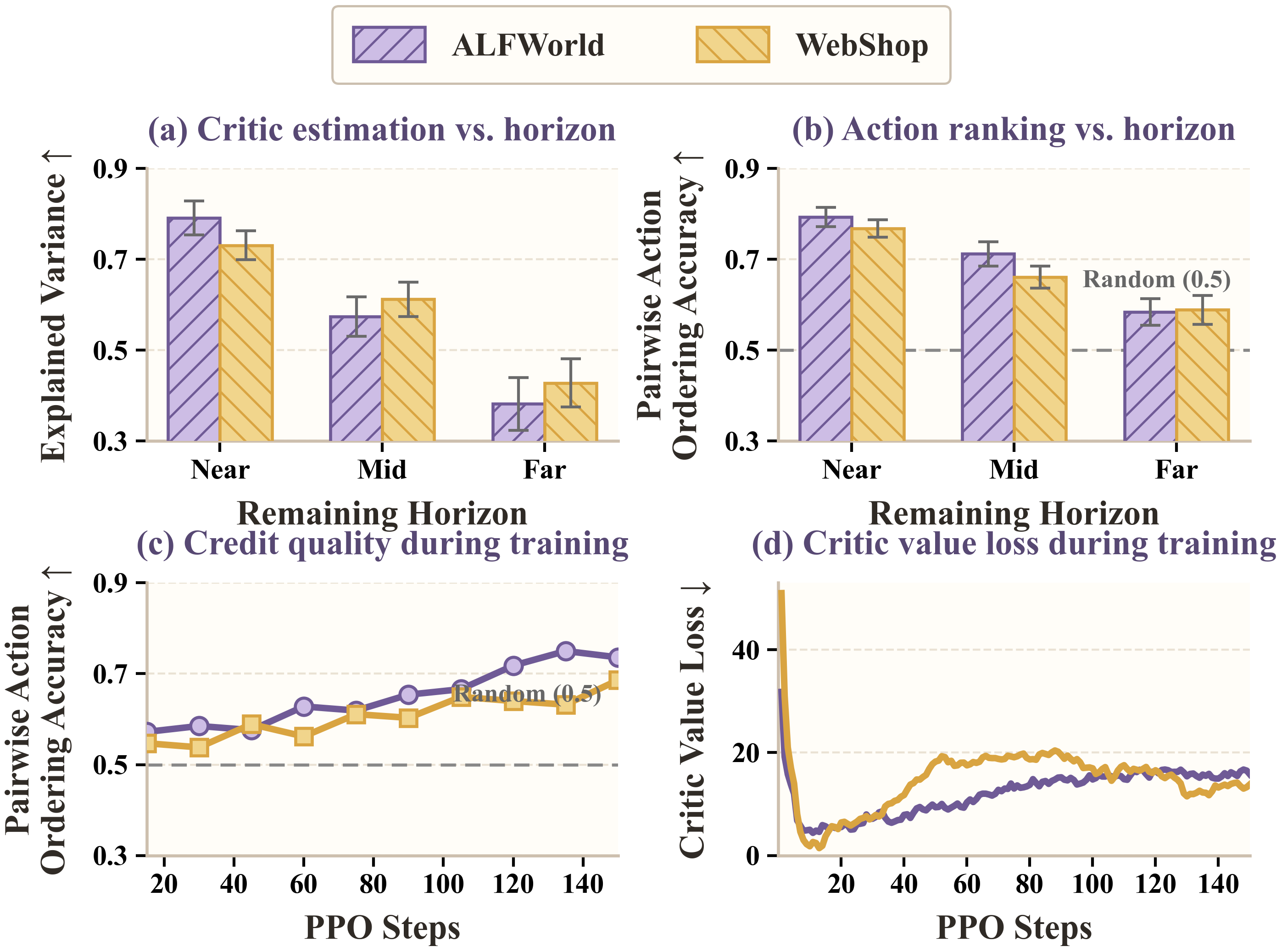}
    \caption{Vanilla PPO critic reliability analysis on ALFWorld and WebShop.}
    \label{fig:ppo_analysis}
    \vspace{-18pt}
\end{wrapfigure}

\textit{Observation 1: Critic estimation degrades with horizon.} Figure~\ref{fig:ppo_analysis}(a) shows a consistent drop in explained variance from near to far horizon prefixes across both benchmarks, indicating that return estimation becomes less reliable as decisions move farther from outcome supervision.

\textit{Observation 2: Local action credit degrades more critically.} Figure~\ref{fig:ppo_analysis}(b) shows that pairwise action ordering accuracy decreases with horizon and approaches the random baseline for far horizon. This result shows that the critic's ordering increasingly diverges from Monte Carlo estimates for decisions farther from terminal outcome. Thus, even a critic with nontrivial return-prediction ability may fail to distinguish better actions, consistent with prior findings on learned value models \citep{kazemnejad2025vineppo}.

\textit{Observation 3: Credit quality changes throughout training.} Figure~\ref{fig:ppo_analysis}(c) shows that action-ordering accuracy improves overall, but it does not improve steadily across updates. Fig.~\ref{fig:ppo_analysis}(d) shows that critic value loss initially decreases but later rebounds and fluctuates, showing that critic quality remains unstable throughout training rather than improving steadily with policy updates.\citep{pan2026evpo}.

Together, these results show a limitation of the standard return critic: its action rankings become less reliable for decisions far from the final outcome and can fluctuate during training. This motivates adding a direct success signal and using it only when the critic is reliable.

\vspace{-8pt}
\section{Methodology}
\label{sec-method}
\vspace{-8pt}

\subsection{Overview}
\label{sec:overview}
\vspace{-4pt}

As illustrated in Fig.~\ref{fig:method_overview}, \ourmethod addresses the gap between return prediction and reliable credit assignment in long trajectories. While return critics can implicitly encode task success, this signal may be unreliable for local credit assignment in practice. \ourmethod therefore augments the return critic with an \textbf{Success Value} head. \textbf{Success-GAE} uses changes in this prediction to compute step credit, and \ourmethod scales this credit according to the \textbf{reliability} of the success critic. The resulting step advantage is finally broadcast to all tokens of the corresponding action for token-level PPO optimization.

\begin{figure*}[t]
    \vspace{-10pt}
    \centering
    \includegraphics[width=\textwidth]{figures/framework_overview.png}
    \caption{
    \textbf{Overview of \ourmethod.}
    \textbf{Top:} A simple rollout example contrasts an ideal return critic with practical training, where sparse supervision and imperfect learning make reliable action-level credit difficult.
    \textbf{Bottom:} \ourmethod learns success-aware critic signals, converts them into reliable environment-step credit, and uses the resulting advantages for token-level policy optimization.
    }
    \vspace{-15pt}
    \label{fig:method_overview}
\end{figure*}

\subsection{Success-Aware Dual-Head Critic}
\label{sec:dual_head_critic}

The preceding analysis suggests that accurate return estimation alone may not provide reliable local credit for eventual task success. We therefore augment the conventional return critic with an explicit success-prediction head. Given an interaction prefix $h_t$, the dual-head critic outputs $\mathcal{C}(h_t)=\big(V_\psi(h_t),P_\phi(h_t)\big)$, where $V_\psi(h_t)$ estimates future return and $P_\phi(h_t)$ denotes the \textbf{Success Value} defined in Sec.~\ref{sec:problem_setup}. The return head predicts expected future reward. The success head predicts whether the task will eventually be completed from the current prefix. To jointly learn these two signals from collected trajectories, we optimize return regression together with success prediction:
\begin{equation}\small
\mathcal{L}_{\mathrm{critic}}
=
\mathbb{E}_{\tau}
\left[
\frac{1}{T}
\sum_{t=0}^{T-1}
\left(
\big(V_\psi(h_t)-\hat G_t\big)^2
+
\eta_S\,\mathrm{BCE}\!\left(P_\phi(h_t),S\right)
\right)
\right].
\label{eq:critic_objective}
\end{equation}
Here, $\hat G_t$ is the return target for prefix $h_t$, $S$ is the trajectory-level success label, and $\eta_S$ balances return regression and success prediction. Because each interaction prefix is paired with the eventual task outcome, the success head directly learns to predict eventual task completion from that prefix, rather than relying on this information to emerge implicitly from return fitting. This explicit Success Value will serve as the basis for constructing success-aware environment-step credit in the next stage.

\vspace{-4pt}
\subsection{Success-GAE}
\label{sec:success_gae}
\vspace{-4pt}

The Success Value is defined on interaction prefixes, but policy learning requires credit for the action that moves the agent from $h_t$ to $h_{t+1}$. We therefore turn successive success predictions into an action-level signal. Let $p_t=P_\phi(h_t)$ and set the terminal prediction to the observed outcome, $p_T=S$. We define the success temporal difference as $\delta_t^S=p_{t+1}-p_t$: an action receives positive credit when it increases the predicted probability of eventual success and negative credit when it decreases it.

A single temporal difference captures only the immediate change in success prospects, whereas long-horizon actions may influence several subsequent steps. We therefore aggregate future success differences in the same spirit as generalized advantage estimation and define \textbf{Success-GAE} as
\begin{equation}\small
\hat A_t^S
=
\sum_{l=0}^{T-t-1}
\lambda_S^l\delta_{t+l}^S,
\qquad
\lambda_S\in[0,1].
\label{eq:success_gae}
\end{equation}
Here, $\hat A_t^S$ is the success-oriented advantage for action $a_t$, $\delta_{t+l}^S$ measures subsequent changes in predicted success, and $\lambda_S$ controls how strongly later changes contribute to the current action. Unlike return-based GAE, Success-GAE does not use the environment discount factor because it tracks changes in task-success probability rather than accumulated reward. The resulting signal assigns credit based on whether intermediate decisions make eventual success more or less likely.

\vspace{-4pt}
\subsection{Reliability-Aware Credit Integration}
\label{sec:reliability_integration}
\vspace{-4pt}

\paragraph{Reliability Estimation.}
Because Success-GAE is useful only when the Success Value predictor is reliable, we assess its quality at optimization stage $k$ using the Brier score and explained variance:
{\small
\begin{align}
\mathrm{BS}_k
&=
\frac{1}{N_k}
\sum_{n=1}^{N_k}
\left(P_\phi(h^{(n)})-S^{(n)}\right)^2,
\\[-2pt]
\mathrm{EV}_k^S
&=
1-
\frac{\operatorname{Var}\!\left(S-P_\phi(h)\right)}
{\operatorname{Var}(S)+\varepsilon}.
\label{eq:success_reliability_metrics}
\end{align}
}
Here, $\mathrm{BS}_k$ measures absolute probabilistic error, whereas $\mathrm{EV}_k^S$ measures whether prediction differences track variation in eventual task success. The two metrics therefore capture complementary aspects of reliability and provide a more complete view of critic quality.

\vspace{-4pt}
\paragraph{Reliability Weighting.}
We next convert these metrics into a weight in $[0,1]$. Let $\bar S_k=N_k^{-1}\sum_n S^{(n)}$ denote the success rate in the current batch. A constant predictor $P_\phi(h)=\bar S_k$ has Brier score $\bar S_k(1-\bar S_k)$, so we use this error as a natural reference for prediction quality and clip explained variance to obtain a comparable reliability scale:
{\small
\begin{align}
q_k^{\mathrm{BS}}
&=
\operatorname{clip}\!\left(
1-
\frac{\mathrm{BS}_k}
{\bar S_k(1-\bar S_k)+\varepsilon},
0,1
\right),
\\[-2pt]
q_k^{\mathrm{EV}}
&=
\operatorname{clip}\!\left(
\mathrm{EV}_k^S,
0,1
\right),
\qquad
c_k=q_k^{\mathrm{BS}}q_k^{\mathrm{EV}}.
\label{eq:success_reliability}
\end{align}
}
Thus, $q_k^{\mathrm{BS}}$ is positive only when the predictor outperforms the constant baseline, while $q_k^{\mathrm{EV}}$ retains only positive explained variance. Their product $c_k$ gives substantial weight to success credit only when both criteria support reliable and informative predictions.

\vspace{-4pt}
\paragraph{Credit Integration.}
Finally, we combine the return and weighted success advantages:
\begin{equation}\small
\hat A_t
=
\hat A_t^V
+
\beta\,c_k\,\hat A_t^S,
\label{eq:final_advantage}
\end{equation}
where $\beta$ controls the contribution of success-based credit. When $c_k$ is small, the update falls back to the return advantage; as reliability improves, success-based credit contributes more.

\vspace{-4pt}
\subsection{Token-Level Policy Optimization}
\label{sec:policy_optimization}
\vspace{-4pt}

To optimize the autoregressive policy, we assign this advantage to all tokens in the action and retain the standard token-level clipped PPO objective \citep{schulman2017ppo}. We optimize:
\begin{equation}\small
\mathcal{L}_{\mathrm{actor}}(\theta)
=
-\mathbb{E}\!\left[
\frac{1}{\sum_t L_t}
\sum_t\sum_{i=1}^{L_t}
\min\!\left(
\rho_{t,i}(\theta)\hat A_t,\,
\operatorname{clip}\!\left(
\rho_{t,i}(\theta),1-\epsilon,1+\epsilon
\right)\hat A_t
\right)
\right].
\label{eq:actor_objective}
\end{equation}
All tokens within the same action share the environment-step credit $\hat A_t$, while the policy update remains standard token-level PPO. Thus, \ourmethod augments the action advantage with reliability-weighted success credit while leaving the PPO actor objective unchanged. Appendix~\ref{app:theory} provides a theoretical analysis of when the added success signal can improve the distinction between actions.

\vspace{-4pt}
\section{Experiment}
\label{sec:experiments}
\vspace{-4pt}

\begin{table*}[t]
\centering
\caption{
\textbf{Main results across agentic benchmarks.}
We report success rate (\%) on ALFWorld, and task score and success rate on ScienceWorld and WebShop. Results are averaged over three runs. \protect\colorbox{bestbg}{\textbf{Best}} and \protect\colorbox{secondbg}{\underline{second-best}} are highlighted within each base-model block.
}
\label{tab:main_results}

\resizebox{\textwidth}{!}{
\begin{tabular}{lccccccccccc}
\toprule

\multirow{2}{*}{\textbf{Method}}
& \multicolumn{7}{c}{\textbf{ALFWorld}}
& \multicolumn{2}{c}{\textbf{ScienceWorld}}
& \multicolumn{2}{c}{\textbf{WebShop}} \\

\cmidrule(lr){2-8}
\cmidrule(lr){9-10}
\cmidrule(lr){11-12}

\rowcolor{subheaderbg}
& \color{subheadertext}\textbf{Pick}
& \color{subheadertext}\textbf{Look}
& \color{subheadertext}\textbf{Clean}
& \color{subheadertext}\textbf{Heat}
& \color{subheadertext}\textbf{Cool}
& \color{subheadertext}\textbf{Pick2}
& \color{subheadertext}\textbf{All}
& \color{subheadertext}\textbf{Score}
& \color{subheadertext}\textbf{Succ}
& \color{subheadertext}\textbf{Score}
& \color{subheadertext}\textbf{Succ} \\

\midrule


\rowcolor{sectionbg}
\multicolumn{12}{l}{\textbf{\textit{Closed-Source Models}}} \\

GPT-4o
& 75.3 & 60.8 & 31.2 & 56.7 & 21.6 & 49.8 & 48.0
& 54.3 & 45.4 & 31.8 & 23.7 \\

Gemini-2.5-Pro
& 92.8 & 63.3 & 62.1 & 69.0 & 26.6 & 58.7 & 60.3
& 47.8 & 36.7 & 42.5 & 35.9 \\

\midrule


\rowcolor{sectionbg}
\multicolumn{12}{l}{
\textbf{\textit{Base: Qwen2.5-1.5B-Instruct}}
} \\

Direct Prompt
& 5.9 & 5.5 & 3.3 & 9.7 & 4.2 & 0.0 & 4.1
& 5.9 & 0.7 & 23.1 & 5.2 \\

ReAct
& 17.4 & 20.5 & 15.7 & 6.2 & 7.7 & 2.0 & 12.8
& 9.0 & 1.2 & 40.1 & 11.3 \\

Reflexion
& 35.3 & 22.2 & 21.7 & 13.6 & 19.4 & 3.7 & 21.8
& 7.1 & 3.9 & 55.8 & 21.9 \\

PPO
& 64.8 & 40.5 & 57.1 & 60.6 & 46.4 & 47.4 & 54.4
& 29.3 & 10.9 & 73.8 & 51.5 \\

RLOO
& 88.3 & 52.8 & 71.0 & 62.8 & 66.4 & 56.9 & 69.7
& 30.6 & 18.4 & 73.9 & 52.1 \\

GRPO
& 85.3 & 53.7 & 84.5 & 78.2 & 59.7 & 53.5 & 72.8
& 31.7 & 21.1 & 75.8 & 56.8 \\

GSPO
& 91.6 & 68.4 & 87.2 & 84.6 & 65.8 & 68.7 & 79.3
& 33.9 & 23.6 & 79.7 & 60.9 \\

GiGPO
& 96.0 & 76.5 & 91.8 & 91.3 & 71.7 & 79.5 & 86.1
& 35.6 & 25.8 & 83.1 & 65.0 \\

BEACON
& \best{100} & \second{88.2} & 86.7 & \best{100} & \second{78.9} & \best{92.9} & \second{91.4}
& \best{58.9} & \second{45.3} & \second{86.1} & 75.6 \\

ST-PPO
& \second{97.4} & 79.1 & \second{93.6} & 89.7 & 75.4 & 83.1 & 87.7
& 40.8 & 31.3 & 84.8 & 68.0 \\

CAPO
& \best{100} & 84.5 & \best{94.1} & \second{96.2} & 77.6 & \second{89.4} & 91.2
& 52.6 & 40.6 & \best{86.4} & \second{76.6} \\

\rowcolor{oursbg}
\textbf{\ourmethod (Ours)}
& \best{100} & \best{97.4} & 88.9 & \best{100} & \best{93.3} & 82.6 & \best{93.4}
& \second{57.8} & \best{46.8} & 85.3 & \best{78.1} \\

\midrule


\rowcolor{sectionbg}
\multicolumn{12}{l}{
\textbf{\textit{Base: Qwen2.5-7B-Instruct}}
} \\

Direct Prompt
& 33.4 & 21.6 & 19.3 & 6.9 & 2.8 & 3.2 & 14.8
& 11.4 & 4.2 & 26.4 & 7.8 \\

ReAct
& 48.5 & 35.4 & 34.3 & 13.2 & 18.2 & 17.6 & 31.2
& 17.4 & 7.8 & 46.2 & 19.5 \\

Reflexion
& 62.0 & 41.6 & 44.9 & 30.9 & 36.3 & 23.8 & 42.7
& 23.4 & 11.7 & 58.1 & 28.8 \\

PPO
& 92.3 & 64.0 & 92.5 & 89.5 & 80.3 & 68.8 & 80.4
& 37.1 & 24.0 & 81.4 & 68.7 \\

RLOO
& 87.6 & 78.2 & 87.3 & 81.3 & 71.9 & 48.9 & 75.5
& 57.4 & 44.6 & 80.3 & 65.7 \\

GRPO
& 90.8 & 66.1 & 89.3 & 74.7 & 72.5 & 64.7 & 77.6
& 61.8 & 49.1 & 79.3 & 66.1 \\

GSPO
& 94.6 & 74.8 & 93.7 & 79.6 & 80.1 & 72.5 & 84.5
& 65.2 & 51.0 & 81.7 & 69.3 \\

GiGPO
& \second{97.7} & 82.7 & \second{98.8} & 83.7 & 89.3 & 79.2 & 90.8
& 69.2 & 53.4 & 84.4 & 72.8 \\

BEACON
& \best{100} & 81.8 & 96.3 & \best{92.9} & \second{94.7} & \second{90.0} & \second{94.5}
& \second{83.7} & \second{64.3} & 87.7 & \second{79.7} \\

ST-PPO
& \best{100} & \second{86.4} & 97.5 & 88.2 & 91.0 & 84.6 & 92.7
& 73.8 & 56.9 & 86.2 & 75.4 \\

CAPO
& \best{100} & 84.7 & \best{100} & \second{91.6} & 93.6 & 88.4 & \second{94.5}
& 79.8 & 61.8 & \second{88.9} & 78.0 \\

\rowcolor{oursbg}
\textbf{\ourmethod (Ours)}
& \best{100} & \best{100} & \best{100} & 91.3 & \best{100} & \best{90.5} & \best{97.4}
& \best{86.7} & \best{66.5} & \best{92.7} & \best{83.8} \\

\bottomrule
\end{tabular}
}
\vspace{-15pt}
\end{table*}

\subsection{Experimental Setup}
\label{sec:experimental_setup}

\paragraph{Benchmarks.}

We evaluate \ourmethod on five long-horizon agentic benchmarks spanning two major interaction paradigms. ALFWorld \citep{shridhar2021alfworld}, WebShop \citep{yao2022webshop}, and ScienceWorld \citep{wang2022scienceworld} represent text-based interactive environments, covering embodied household tasks, web navigation, and scientific reasoning. BFCL-v4 \citep{patil2025bfcl} and AppWorld \citep{trivedi2024appworld} evaluate tool-using agents through function calling and application-level execution. These benchmarks let us test \ourmethod across different interaction structures, action spaces, and task horizons. Full benchmark details are provided in Appendix~\ref{app:framework_details}.

\vspace{-4pt}
\paragraph{Baselines.}

We compare \ourmethod against representative prompting and RL baselines. Closed-source references include GPT-4o \citep{openai2024gpt4o} and Gemini-2.5-Pro \citep{comanici2025gemini}. Prompting baselines include Direct Prompt, ReAct \citep{yao2023react}, and Reflexion \citep{shinn2023reflexion}. RL baselines cover three categories: (1) Standard policy optimization, including PPO \citep{schulman2017ppo}, RLOO \citep{ahmadian2024rloo}, GRPO \citep{shao2024deepseekmath}, and GSPO \citep{zheng2025gspo}; (2) Fine-grained or structured credit assignment, including GiGPO \citep{feng2025gigpo}, BEACON \citep{wang2026beacon}, ST-PPO \citep{li2025stppo}, and CAPO \citep{wang2026capo}; and (3) Critic-free sequence-level optimization, represented by SeeUPO \citep{hu2026seeupo}. We use official implementations or closely follow the reported configurations whenever available.

\vspace{-4pt}
\paragraph{Implementation.}

Our experiments use Qwen2.5-1.5B-Instruct and Qwen2.5-7B-Instruct \citep{qwen2024qwen25}, together with Qwen3-8B \citep{yang2025qwen3}, as backbone models. ALFWorld, WebShop, and ScienceWorld are implemented with verl-agent \citep{feng2025gigpo}, while BFCL-v4 and AppWorld are built on AgentEvolver \citep{zhai2025agentevolver}. We retain Qwen2.5 due to more stable agentic performance and infrastructure support, and use the same base model across RL methods for fair comparison. Within each backbone and benchmark setting, all trainable methods share the same initial model, rollout budget, environment interface, and evaluation protocol. Detailed configurations and hyperparameters are provided in Appendix~\ref{app:framework_details}.

\vspace{-8pt}
\subsection{Main Results}
\label{sec:main_results}
\vspace{-4pt}

\paragraph{Text-Based Interactive Benchmarks.}
Table~\ref{tab:main_results} reports results on ALFWorld, ScienceWorld, and WebShop. With Qwen2.5-1.5B-Instruct, \ourmethod reaches success rates of 93.4\% on ALFWorld, 46.8\% on ScienceWorld, and 78.1\% on WebShop, exceeding the strongest RL baselines by 2.0, 1.5, and 1.5 percentage points, respectively. With Qwen2.5-7B-Instruct, \ourmethod further improves to 97.4\%, 66.5\%, and 83.8\%, with gains of 2.9, 2.2, and 4.1 percentage points over the strongest baselines. These gains appear across all three text-interaction benchmarks and both model sizes, indicating that the benefit of \ourmethod is not limited to a single environment or backbone scale.

\begin{wraptable}{r}{0.60\columnwidth}
    \vspace{-15pt}
    \centering
    \caption{
    \textbf{Results on agent tool-use benchmarks.}
    We report \textit{avg@4} and \textit{pass@4}.
    \protect\colorbox{bestbg}{\textbf{Best}} and
    \protect\colorbox{secondbg}{\underline{second-best}}
    are highlighted within each base-model block.
    }
    \label{tab:tool_results}

    \scriptsize
    \setlength{\tabcolsep}{2.2pt}
    \renewcommand{\arraystretch}{1.02}

    \resizebox{\linewidth}{!}{
    \begin{tabular}{lcccccc}
    \toprule
    \textbf{Method}
    & \multicolumn{2}{c}{\textbf{AppWorld}}
    & \multicolumn{2}{c}{\textbf{BFCL-v4}}
    & \multicolumn{2}{c}{\textbf{Avg.}} \\
    \cmidrule(lr){2-3}
    \cmidrule(lr){4-5}
    \cmidrule(lr){6-7}

    \rowcolor{subheaderbg}
    & \cellcolor{subheaderbg}\color{subheadertext}\textbf{avg@4}
    & \cellcolor{subheaderbg}\color{subheadertext}\textbf{pass@4}
    & \cellcolor{subheaderbg}\color{subheadertext}\textbf{avg@4}
    & \cellcolor{subheaderbg}\color{subheadertext}\textbf{pass@4}
    & \cellcolor{subheaderbg}\color{subheadertext}\textbf{avg@4}
    & \cellcolor{subheaderbg}\color{subheadertext}\textbf{pass@4} \\
    \midrule

    \multicolumn{7}{l}{\textit{Qwen2.5-7B-Instruct}} \\

    Base
    & 12.63 & 24.56
    & 18.00 & 27.00
    & 15.32 & 25.78 \\

    +PPO
    & 27.36 & 42.11
    & 34.75 & 43.00
    & 31.06 & 42.56 \\

    +GRPO
    & 30.70 & 45.61
    & 37.50 & 46.00
    & 34.10 & 45.81 \\

    +GSPO
    & 25.00 & 40.35
    & 40.25 & 47.00
    & 32.63 & 43.68 \\

    +GiGPO
    & 35.44 & 52.63
    & \second{43.75} & 49.00
    & 39.60 & 50.82 \\

    +SeeUPO
    & \second{40.74} & \best{57.89}
    & 42.50 & \second{50.00}
    & \second{41.62} & \second{53.95} \\

    \rowcolor{gray!15}
    +\ourmethod
    & \best{42.89} & \second{57.02}
    & \best{44.25} & \best{55.00}
    & \best{43.57} & \best{56.01} \\

    \midrule

    \multicolumn{7}{l}{\textit{Qwen3-8B}} \\

    Base
    & 13.63 & 28.07
    & 31.25 & 38.00
    & 22.44 & 33.04 \\

    +PPO
    & 26.68 & 43.86
    & 36.25 & 44.00
    & 31.47 & 43.93 \\

    +GRPO
    & 31.71 & 49.12
    & 37.75 & 47.00
    & 34.73 & 48.06 \\

    +GSPO
    & 25.96 & 43.86
    & 39.25 & 48.00
    & 32.61 & 45.93 \\

    +GiGPO
    & 40.09 & 57.89
    & \second{47.25} & \second{54.00}
    & 43.67 & 55.95 \\

    +SeeUPO
    & \best{50.39} & \second{65.79}
    & 46.25 & \second{54.00}
    & \second{48.32} & \second{59.82} \\

    \rowcolor{gray!15}
    +\ourmethod
    & \second{49.12} & \best{68.42}
    & \best{51.75} & \best{57.00}
    & \best{50.44} & \best{62.47} \\

    \bottomrule
    \end{tabular}
    }

    \vspace{-20pt}
\end{wraptable}

\vspace{-8pt}
\paragraph{Tool-Use Benchmarks.}
Table~\ref{tab:tool_results} evaluates \ourmethod on AppWorld and BFCL-v4 with Qwen2.5-7B-Instruct and Qwen3-8B. \ourmethod achieves the best average performance across the two benchmarks with both backbones, reaching avg@4/pass@4 scores of 43.57/56.01 with Qwen2.5-7B and 50.44/62.47 with Qwen3-8B. On BFCL-v4, \ourmethod obtains 44.25/55.00 and 51.75/57.00, respectively, outperforming all compared RL baselines. On AppWorld, \ourmethod achieves the highest avg@4 with Qwen2.5-7B and the highest pass@4 with Qwen3-8B, while remaining close to SeeUPO on the other metric. Overall, the gains extend from text-interaction benchmarks to tool-use environments.

\vspace{-6pt}
\subsection{Framework Analysis}
\label{sec:credit_quality}
\vspace{-4pt}

\paragraph{Episode Length.}
Figure~\ref{fig:episode_length} compares the average episode length on the three text-interaction benchmarks. \ourmethod reduces the number of environment steps from 27.6 to 19.1 on ALFWorld and from 19.7 to 13.2 on ScienceWorld, while remaining close to GiGPO on WebShop (6.9 vs.\ 6.8). Together with the task results, this suggests that \ourmethod achieves better outcomes with fewer interaction steps, avoiding unnecessarily long action sequences rather than relying on extended exploration.

\vspace{-10pt}
\begin{figure}[!b]
    \centering
    \begin{minipage}[t]{0.48\textwidth}
        \vspace{0pt}
        \centering
        \includegraphics[width=\linewidth]
            {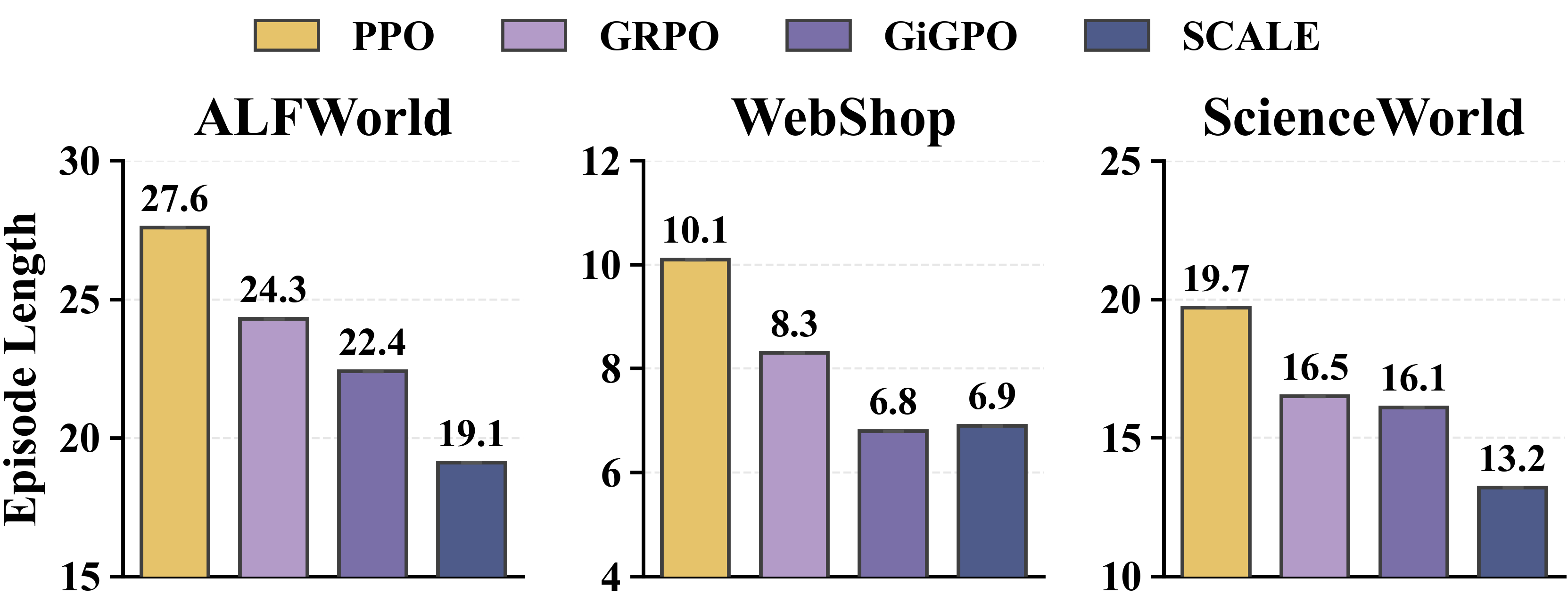}
        \caption{
            \textbf{Episode length on text-interaction benchmarks.}
            We report the average environment steps per
            episode at the final checkpoint on ALFWorld, WebShop,
            and ScienceWorld.
        }
        \label{fig:episode_length}
    \end{minipage}\hfill
    \begin{minipage}[t]{0.48\textwidth}
        \vspace{0pt}
        \centering
        \includegraphics[width=\linewidth]
            {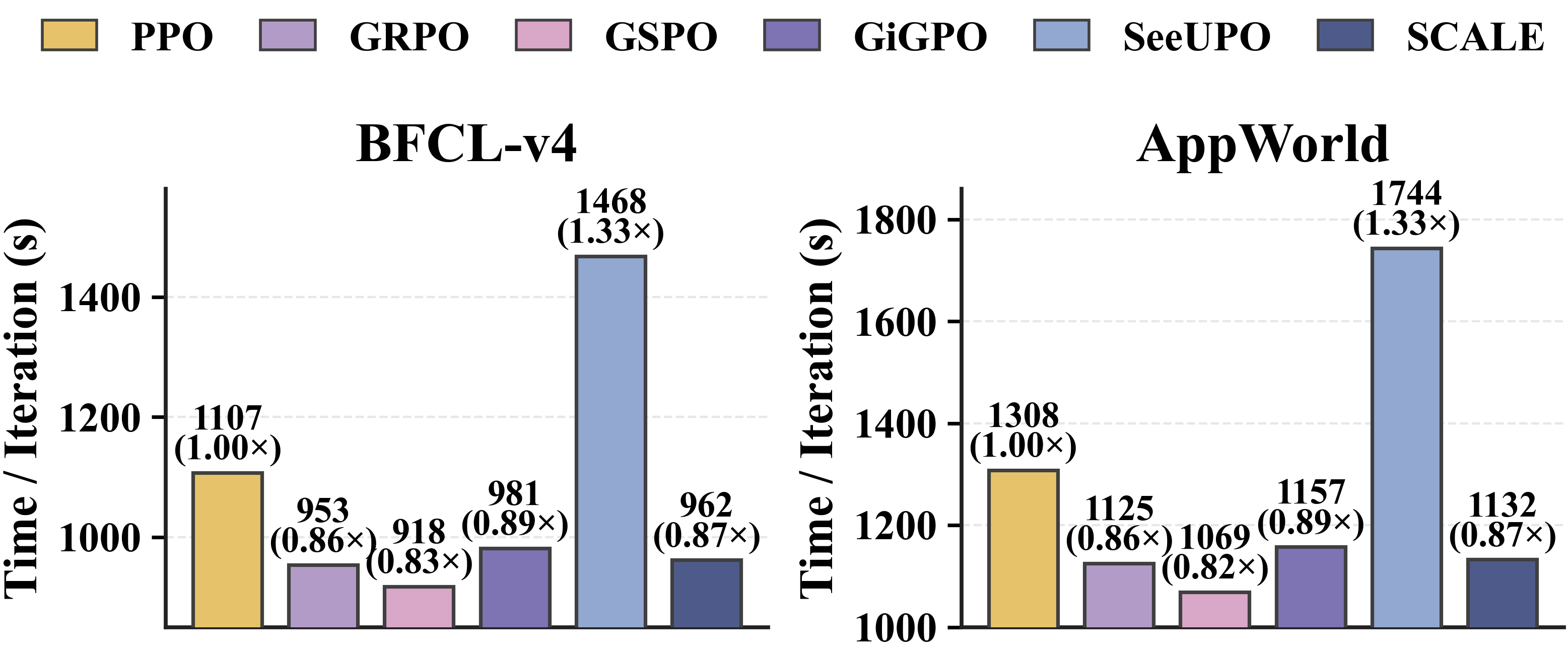}
        \caption{
            \textbf{Training efficiency on BFCL-v4 and AppWorld.}
            We report time per iteration with Qwen3-8B.
            Relative cost is normalized to PPO.
        }
        \label{fig:training_efficiency}
    \end{minipage}
\end{figure}

\paragraph{Training Efficiency.}
Figure~\ref{fig:training_efficiency} compares wall-clock time per training iteration on BFCL-v4 and AppWorld. Although \ourmethod adds a success head and reliability estimation, its training cost remains close to GRPO and GiGPO and is much lower than SeeUPO. This provides the performance gains without a large increase in training cost, suggesting that the additional critic computation introduces only modest practical overhead.

\begin{figure*}[t]
    \centering
    \includegraphics[width=\textwidth]{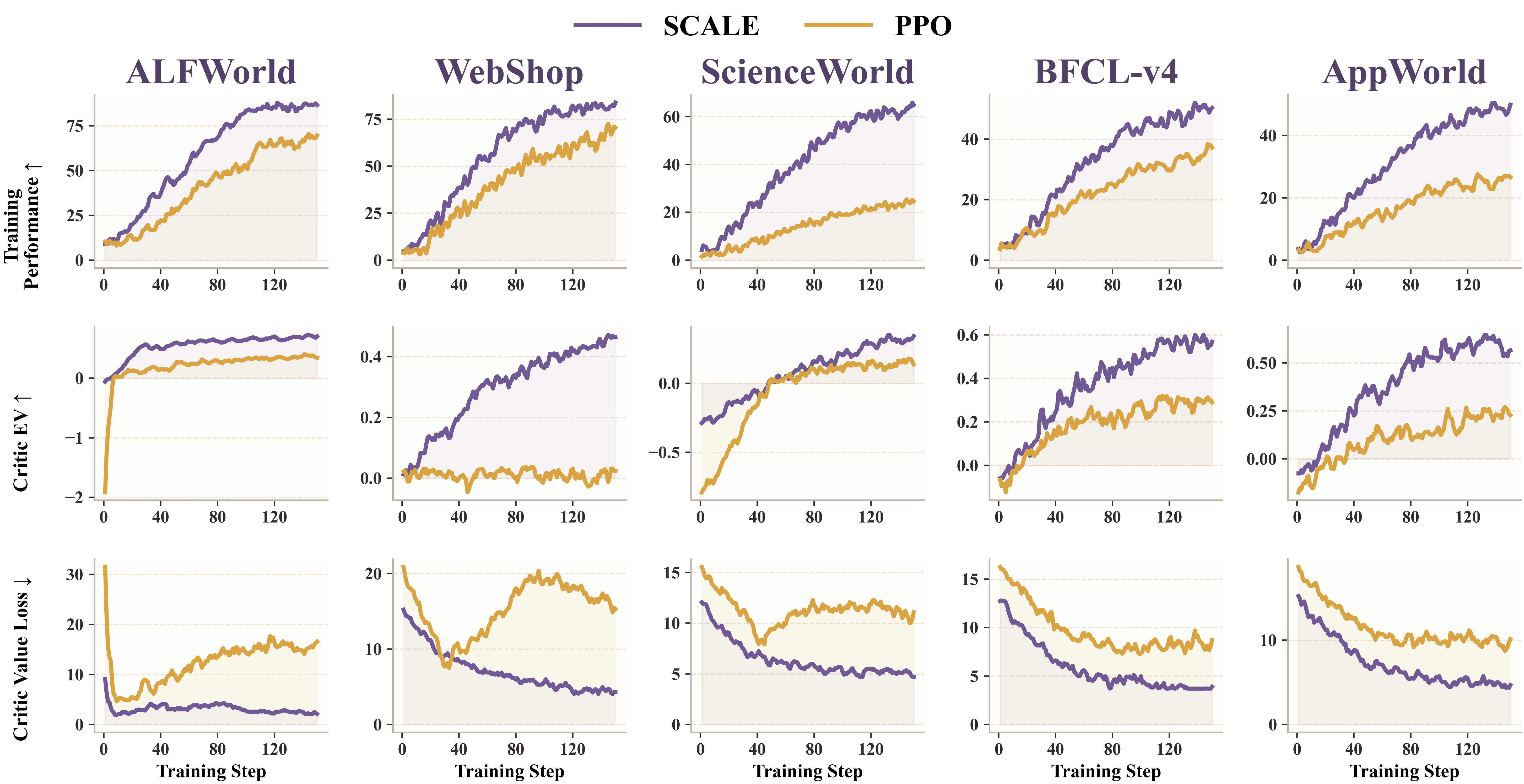}
    \caption{
    \textbf{Training dynamics across five agent benchmarks.}
    \textbf{Top:} Task performance during training, reported as success rate for ALFWorld, WebShop, and ScienceWorld, and \textit{avg@4} for BFCL-v4 and AppWorld.
    \textbf{Middle:} Critic explained variance during training.
    \textbf{Bottom:} Critic value loss during training.
    Compared with PPO, \ourmethod achieves higher explained variance and generally lower, smoother critic loss across the five benchmarks.
    }
    \label{fig:framework_analysis}
    \vspace{-8pt}
\end{figure*}

\vspace{-8pt}
\paragraph{Critic Learning.}
Figure~\ref{fig:framework_analysis} shows that \ourmethod achieves higher critic explained variance than PPO across five benchmarks, indicating that its critic better captures variation in future returns. Its critic loss decreases more smoothly and remains lower, whereas PPO shows larger fluctuations or later increases in several settings. These results suggest that adding the success signal helps the critic learn more stable and informative value estimates, with the same pattern across interaction settings.

\begin{wraptable}{r}{0.56\columnwidth}
\centering
\vspace{-8pt}
\caption{\textbf{Ablation study of \ourmethod with Qwen3-8B.} We report \textit{avg@4} (\%) on BFCL-v4 and AppWorld.}
\label{tab:ablation}
\vspace{-4pt}

\footnotesize
\setlength{\tabcolsep}{4.2pt}
\renewcommand{\arraystretch}{0.96}

\resizebox{\linewidth}{!}{
\begin{tabular}{lcc}
\toprule
\textbf{Variant}
& \textbf{BFCL-v4}
& \textbf{AppWorld} \\
\midrule

\textbf{\ourmethod}
& \textbf{51.75}
& \textbf{49.12} \\

\midrule
\rowcolor{sectionbg}
\multicolumn{3}{l}{\textit{Success-Aware Critic}} \\

\quad w/o Success Value Head
& 43.25 & 39.91 \\

\quad w/o Return Value Head
& 29.75 & 20.18 \\

\midrule
\rowcolor{sectionbg}
\multicolumn{3}{l}{\textit{Success Credit Estimation}} \\

\quad w/ One-Step Success TD
& 39.75 & 31.14 \\

\quad w/o Success Credit
& 46.25 & 43.97 \\

\midrule
\rowcolor{sectionbg}
\multicolumn{3}{l}{\textit{Reliability-Aware Integration}} \\

\quad w/o Reliability Weighting
& 48.00 & 45.83 \\

\quad w/ Brier Score Only
& 49.25 & 47.04 \\

\quad w/ Explained Variance Only
& 50.00 & 47.81 \\

\midrule
PPO
& 36.25 & 26.68 \\

\bottomrule
\end{tabular}
}

\vspace{-10pt}
\end{wraptable}

\vspace{-4pt}
\paragraph{Training Performance.}
The improved critic dynamics are accompanied by stronger policy learning. Across ALFWorld, WebShop, and ScienceWorld, \ourmethod reaches higher training success rates than PPO, while on BFCL-v4 and AppWorld it achieves consistently higher \textit{avg@4}. The performance gap also tends to widen over training, suggesting that better critic learning translates into more effective long-horizon policy optimization rather than merely better value fitting.

\vspace{-4pt}
\paragraph{Ablation Study.}
Table~\ref{tab:ablation} shows that removing any of the tested components lowers the performance of \ourmethod. Removing the Success Value head lowers \textit{avg@4} from 51.75 to 43.25 on BFCL-v4 and from 49.12 to 39.91 on AppWorld, while removing the return head causes larger drops to 29.75 and 20.18. Replacing multi-step success credit with one-step TD also degrades performance to 39.75 and 31.14. Finally, using only Brier score or explained variance remains below the full method on both benchmarks, supporting their combined use for reliability-aware integration. Overall, the results show that return estimation, success-based credit, and reliability weighting each play a distinct role in the full framework.

\vspace{-8pt}
\section{Conclusion}
\vspace{-8pt}

We introduced \ourmethod for credit assignment in long-horizon language agents. Our analysis shows that a critic can retain aggregate return-prediction quality while becoming less reliable at ranking actions, especially for decisions far from terminal supervision. \ourmethod addresses this gap by predicting eventual task success, converting changes in this probability into multi-step action credit, and controlling its contribution according to predictor reliability. Across five agent benchmarks and three model backbones, \ourmethod improves task performance, while maintaining modest training overhead. These results show that improving local credit quality can benefit long-horizon policy learning without changing the underlying token-level PPO objective. Future work could explore incorporating richer signals into critic learning to improve reliability under long-horizon supervision.

\section*{AI use statement} Generative AI tools were used only to improve the readability and linguistic presentation of author-written text. They were not used to develop the methodology, formulate mathematical claims, design experiments, generate or process data. All suggested edits were manually reviewed by the authors, who take full responsibility for the final content of this submission.

\section*{Ethics statement}  This work does not involve human participants, personally identifiable
information, or private and sensitive data. All datasets, software, and experimental procedures were used in accordance with their applicable licenses and research policies. To the best of the authors' knowledge, this work does not raise additional material ethical concerns.      

\section*{Reproducibility statement} We provide the information required to reproduce our results throughout the main paper and supplementary materials. The benchmark datasets are described in
Appendix~\ref{app:framework_details} in detail. Training pipeline includes framework details and evaluation protocols are reported in Appendix~\ref{app:problem-def}.



\bibliography{iclr2027_conference}
\bibliographystyle{iclr2027_conference}


\newpage\clearpage

\appendix

\section{Theoretical Analysis of Success-Aware Credit}
\label{app:theory}

This section analyzes the success-aware credit used in \ourmethod. Our goal is not to prove convergence of token-level PPO. Instead, we study two questions: \textbf{why Success-GAE can provide useful credit for intermediate actions}, and \textbf{when weighting this credit by critic reliability can improve action discrimination}.

\subsection{Local Success-Aware Credit}

Consider an interaction prefix $h$, a continuation policy $\pi$, and $K$ possible actions $\{a_1,\ldots,a_K\}$. After choosing $a_i$, the agent follows $\pi$ until the episode ends and observes $S\in\{0,1\}$. We define
\begin{equation}
\mu_i=\Pr(S=1\mid h,a_i;\pi).
\label{eq:theory_mu}
\end{equation}
Thus, assigning credit at $h$ amounts to estimating and comparing the success probabilities $\{\mu_i\}_{i=1}^K$.

Let $h_0=h$, $\widehat p_j=P_\phi(h_j)$, $\widehat p_H=S$, and $\lambda=\lambda_S$. Success-GAE is
\begin{equation}
\widehat A^S
=
\sum_{j=0}^{H-1}
\lambda^j
(\widehat p_{j+1}-\widehat p_j).
\label{eq:app_success_gae}
\end{equation}
All actions considered at $h$ share the same value $\widehat p_0$. We therefore study
\begin{equation}
\widehat Z=\widehat A^S+\widehat p_0,
\end{equation}
which gives the same ordering over actions as $\widehat A^S$.

\begin{proposition}[Properties of Success-GAE]
\label{prop:success_gae}
The score $\widehat Z$ can be written as
\begin{equation}
\widehat Z
=
(1-\lambda)
\sum_{j=1}^{H-1}
\lambda^{j-1}\widehat p_j
+
\lambda^{H-1}S.
\label{eq:success_weighted}
\end{equation}
Hence $\widehat Z\in[0,1]$.

Let
\begin{equation}
p_j^\star=\Pr(S=1\mid h_j;\pi),
\end{equation}
and let $Z^\star$ be the score in Eq.~\eqref{eq:success_weighted} with $\widehat p_j$ replaced by $p_j^\star$. For every action $a_i$,
\begin{equation}
\mathbb E[Z^\star\mid h,a_i]=\mu_i,
\qquad
\operatorname{Var}(Z^\star\mid h,a_i)
\le
\mu_i(1-\mu_i).
\label{eq:ideal_success_properties}
\end{equation}
If
\begin{equation}
|\widehat p_j-p_j^\star|
\le
\varepsilon_S,
\qquad
1\le j<H,
\end{equation}
then
\begin{equation}
|\widehat Z-Z^\star|
\le
d_S
:=
(1-\lambda^{H-1})\varepsilon_S
\le
\varepsilon_S,
\label{eq:success_error}
\end{equation}
and therefore
\begin{equation}
\left|
\mathbb E[\widehat Z\mid h,a_i]-\mu_i
\right|
\le d_S.
\label{eq:success_bias}
\end{equation}
\end{proposition}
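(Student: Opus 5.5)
The plan is to prove the three claims in the order stated: first the algebraic rewriting, then the two properties of the ideal score, then the perturbation bound. All of them rest on one observation. Once the telescoping sum in Eq.~\eqref{eq:app_success_gae} is rearranged, $\widehat Z$ is a convex combination of the predictions $\widehat p_1,\ldots,\widehat p_{H-1}$ and the outcome $S$.

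\emph{Step 1 (summation by parts).} I would write
\[
\widehat Z=\widehat p_0+\sum_{j=0}^{H-1}\lambda^j\widehat p_{j+1}-\sum_{j=0}^{H-1}\lambda^j\widehat p_j .
\]
Shifting the index in the first sum and cancelling the $j=0$ term of the second against $\widehat p_0$ gives
\[
\widehat Z=\sum_{j=1}^{H}\lambda^{j-1}\widehat p_j-\sum_{j=1}^{H-1}\lambda^{j}\widehat p_j ,
\]
which regroups to $\sum_{j=1}^{H-1}(1-\lambda)\lambda^{j-1}\widehat p_j+\lambda^{H-1}\widehat p_H$. Using $\widehat p_H=S$ yields Eq.~\eqref{eq:success_weighted}.

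The weights are nonnegative for $\lambda\in[0,1]$. They sum to $(1-\lambda^{H-1})+\lambda^{H-1}=1$. Every $\widehat p_j$ and $S$ lies in $[0,1]$, so $\widehat Z\in[0,1]$.

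\emph{Step 2 (ideal score).} Here $p_j^\star=\mathbb E[S\mid h_j]$ is a Doob martingale along the trajectory. For every $j\ge1$, the prefix $h_j$ extends $(h,a_i)$. The tower property therefore gives $\mathbb E[p_j^\star\mid h,a_i]=\mu_i$, and also $\mathbb E[S\mid h,a_i]=\mu_i$. Linearity, applied to the convex combination from Step 1, gives $\mathbb E[Z^\star\mid h,a_i]=\mu_i$.

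For the variance I would not compute second moments of the martingale. Since $Z^\star\in[0,1]$, we have $(Z^\star)^2\le Z^\star$. Hence
\[
\operatorname{Var}(Z^\star\mid h,a_i)\le\mathbb E[Z^\star\mid h,a_i]-\mu_i^2=\mu_i(1-\mu_i).
\]

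\emph{Step 3 (error propagation).} The outcome $S$ is shared by $\widehat Z$ and $Z^\star$, so it cancels:
\[
\widehat Z-Z^\star=(1-\lambda)\sum_{j=1}^{H-1}\lambda^{j-1}(\widehat p_j-p_j^\star).
\]
The triangle inequality and the geometric sum $\sum_{j=1}^{H-1}(1-\lambda)\lambda^{j-1}=1-\lambda^{H-1}$ give $|\widehat Z-Z^\star|\le(1-\lambda^{H-1})\varepsilon_S\le\varepsilon_S$. Taking conditional expectations and using $|\mathbb E X|\le\mathbb E|X|$, together with Step 2, gives Eq.~\eqref{eq:success_bias}.

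\emph{Main obstacle: random horizon.} The delicate point is that the horizon $H$ is the episode length, which is random, so the tower-property argument in Step 2 needs care. I would handle this by extending the trajectory past termination with $p_j^\star:=S$ and $\widehat p_j:=S$ for $j\ge H$. Then $Z^\star$ becomes the fixed infinite convex combination $\sum_{j\ge1}(1-\lambda)\lambda^{j-1}p_j^\star$, or a finite one under a uniform horizon cap. For $\lambda<1$, the terms with $j\ge H$ collapse to $\lambda^{H-1}S$, since their weights sum to $\lambda^{H-1}$. For $\lambda=1$, the formula is simply $\widehat Z=S$ and there is nothing to prove.

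The stopped process $p^\star_{j\wedge H}$ is still a bounded martingale, so the identity $\mathbb E[p_{j\wedge H}^\star\mid h,a_i]=\mu_i$ holds by optional stopping, and boundedness lets dominated convergence justify exchanging expectation with the infinite sum.

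With a random $H$, the bound $d_S$ is itself random. The bias statement should then be read with $d_S$ replaced by its conditional expectation. That quantity is still at most $\varepsilon_S$, which is the form I would actually prove.
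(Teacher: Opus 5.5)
Your proposal is correct. Steps 1 and 3, and the expectation half of Step 2, follow the paper's argument: summation by parts into a convex combination, the tower property for $p_j^\star$, and the triangle inequality with weights summing to $1-\lambda^{H-1}$.

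The genuine difference is the variance bound. The paper expands $Z^\star-\mu_i=\sum_{j=1}^{H}\lambda^{j-1}D_j$ in the martingale increments $D_j=M_j-M_{j-1}$. It uses their orthogonality to get the exact identity $\operatorname{Var}(Z^\star\mid h,a_i)=\sum_{j}\lambda^{2(j-1)}\mathbb E[D_j^2]$, and compares this with $\operatorname{Var}(S\mid h,a_i)=\sum_j\mathbb E[D_j^2]=\mu_i(1-\mu_i)$. You use only that $Z^\star\in[0,1]$ has mean $\mu_i$, so $(Z^\star)^2\le Z^\star$ finishes in one line.

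Your route is shorter and needs no second-moment structure of the martingale. It also applies verbatim to $\widehat Z$, or to any $[0,1]$-valued score; it is the same fact behind the $\tfrac14$ cap in $v_\alpha$. The paper's route yields strictly more: $\operatorname{Var}(S)-\operatorname{Var}(Z^\star)=\sum_j(1-\lambda^{2(j-1)})\mathbb E[D_j^2]\ge 0$. This quantifies how bootstrapping with $\lambda<1$ damps later increments relative to the raw Monte Carlo outcome. Your bound is attained by $S$ itself, so it cannot show any variance reduction from Success-GAE.

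Finally, your stopped-martingale treatment of the random horizon makes explicit something the paper leaves implicit. Your reading of $d_S$ as $\mathbb E[d_S\mid h,a_i]$ in the bias bound is the right one. The paper's proof pulls $\mu_i$ out of sums whose upper limit is $H$, which tacitly treats $H$ as fixed. Its orthogonality argument still survives your extension if you set $D_j=0$ for $j>H$.
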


\begin{proof}
We first derive Eq.~\eqref{eq:success_weighted}. Expanding Success-GAE gives
\begin{align}
\widehat A^S
&=
\sum_{j=0}^{H-1}
\lambda^j
(\widehat p_{j+1}-\widehat p_j)
\nonumber\\
&=
-\widehat p_0
+
\sum_{j=1}^{H-1}
(\lambda^{j-1}-\lambda^j)\widehat p_j
+
\lambda^{H-1}S
\nonumber\\
&=
-\widehat p_0
+
(1-\lambda)
\sum_{j=1}^{H-1}
\lambda^{j-1}\widehat p_j
+
\lambda^{H-1}S.
\end{align}
Adding $\widehat p_0$ gives Eq.~\eqref{eq:success_weighted}. For $\lambda\in[0,1]$, all coefficients are nonnegative and
\begin{equation}
(1-\lambda)
\sum_{j=1}^{H-1}\lambda^{j-1}
+
\lambda^{H-1}
=
1.
\end{equation}
Thus, $\widehat Z$ is a convex combination of values in $[0,1]$, so $\widehat Z\in[0,1]$. Since $\widehat p_0$ is the same for all actions considered at $h$, adding it does not change their ordering.

We next consider the case in which the critic predicts the true success probability. Let $\mathcal F_j$ contain the information available at $h_j$, and define
\begin{equation}
M_0=\mu_i,
\qquad
M_j=\mathbb E[S\mid\mathcal F_j]=p_j^\star,
\quad
1\le j\le H.
\end{equation}
At termination, $M_H=S$. By the tower property,
\begin{align}
\mathbb E[M_{j+1}\mid\mathcal F_j]
&=
\mathbb E[
\mathbb E[S\mid\mathcal F_{j+1}]
\mid\mathcal F_j]
\nonumber\\
&=
\mathbb E[S\mid\mathcal F_j]
=
M_j.
\end{align}
Therefore, $(M_j)_{j=0}^{H}$ is a martingale, and
\begin{equation}
\mathbb E[M_j]=\mu_i
\end{equation}
for every $j$. Applying this result to Eq.~\eqref{eq:success_weighted} gives
\begin{align}
\mathbb E[Z^\star]
&=
(1-\lambda)
\sum_{j=1}^{H-1}
\lambda^{j-1}\mathbb E[M_j]
+
\lambda^{H-1}\mathbb E[M_H]
\nonumber\\
&=
\mu_i
\left[
(1-\lambda)
\sum_{j=1}^{H-1}
\lambda^{j-1}
+
\lambda^{H-1}
\right]
=
\mu_i.
\end{align}

We now bound the variance. Define
\begin{equation}
D_j=M_j-M_{j-1},
\qquad
j=1,\ldots,H.
\end{equation}
Since $M_j=M_0+\sum_{k=1}^{j}D_k$, Eq.~\eqref{eq:success_weighted} implies
\begin{equation}
Z^\star-\mu_i
=
\sum_{j=1}^{H}
\lambda^{j-1}D_j.
\label{eq:martingale_success_decomp}
\end{equation}
For $j<k$, $D_j$ is determined by $\mathcal F_{k-1}$, while
\begin{equation}
\mathbb E[D_k\mid\mathcal F_{k-1}]=0.
\end{equation}
Hence
\begin{equation}
\mathbb E[D_jD_k]
=
\mathbb E\!\left[
D_j
\mathbb E[D_k\mid\mathcal F_{k-1}]
\right]
=
0.
\end{equation}
The increments are therefore orthogonal, and
\begin{align}
\operatorname{Var}(Z^\star)
&=
\sum_{j=1}^{H}
\lambda^{2(j-1)}
\mathbb E[D_j^2]
\nonumber\\
&\le
\sum_{j=1}^{H}
\mathbb E[D_j^2]
=
\operatorname{Var}
\left(
\sum_{j=1}^{H}D_j
\right)
\nonumber\\
&=
\operatorname{Var}(S-\mu_i).
\end{align}
Conditioned on $h$ and $a_i$, $S$ is Bernoulli with mean $\mu_i$. Therefore,
\begin{equation}
\operatorname{Var}(Z^\star)
\le
\mu_i(1-\mu_i).
\end{equation}

Finally, we consider an imperfect critic. The terminal value is exact because $\widehat p_H=p_H^\star=S$. Therefore,
\begin{align}
|\widehat Z-Z^\star|
&=
\left|
(1-\lambda)
\sum_{j=1}^{H-1}
\lambda^{j-1}
(\widehat p_j-p_j^\star)
\right|
\nonumber\\
&\le
(1-\lambda)
\sum_{j=1}^{H-1}
\lambda^{j-1}
|\widehat p_j-p_j^\star|
\nonumber\\
&\le
(1-\lambda)
\sum_{j=1}^{H-1}
\lambda^{j-1}\varepsilon_S
\nonumber\\
&=
(1-\lambda^{H-1})\varepsilon_S
=
d_S.
\end{align}
Because $1-\lambda^{H-1}\le1$, we also have $d_S\le\varepsilon_S$. Combining this bound with $\mathbb E[Z^\star]=\mu_i$ gives
\begin{align}
\left|
\mathbb E[\widehat Z]-\mu_i
\right|
&=
\left|
\mathbb E[\widehat Z-Z^\star]
\right|
\nonumber\\
&\le
\mathbb E[
|\widehat Z-Z^\star|
]
\le
d_S.
\end{align}
\end{proof}

As a direct consequence, if $\mu_i-\mu_j>2d_S$, then $\mathbb E[\widehat Z\mid h,a_i]>\mathbb E[\widehat Z\mid h,a_j]$. Thus, Success-GAE preserves the expected ordering of two actions whenever their true success probabilities differ by more than the error allowed by the critic. This connects Proposition~\ref{prop:success_gae} directly to the pairwise ordering metric in Sec.~\ref{sec:preliminary_analysis}.

\subsection{Reliability-Weighted Credit Integration}

We next study what happens when return credit and success credit are combined. Let $Y_i\in[0,1]$ denote the score obtained from the return critic. Suppose
\begin{equation}
\mathbb E[Y_i]=\nu_i+b_i^V,
\qquad
|b_i^V|\le\varepsilon_V,
\qquad
\operatorname{Std}(Y_i)\le\sigma_V,
\label{eq:return_score}
\end{equation}
where $\nu_i$ is the value that $Y_i$ would estimate without critic error.

Let $\alpha\ge0$ specify how much success credit is added. In \ourmethod, this quantity corresponds to the scale set by $\beta c_k$. For analysis, define
\begin{equation}
X_i
=
\frac{Y_i+\alpha\widehat Z_i}{1+\alpha},
\qquad
u_i
=
\frac{\nu_i+\alpha\mu_i}{1+\alpha}.
\label{eq:fused_score}
\end{equation}
The denominator is shared across actions and therefore does not affect their ordering.

Proposition~\ref{prop:success_gae} gives
\begin{equation}
\left|
\mathbb E[X_i]-u_i
\right|
\le
B_\alpha
:=
\frac{\varepsilon_V+\alpha d_S}{1+\alpha}.
\label{eq:fused_bias}
\end{equation}
Suppose the standard deviation of $Z^\star$ is at most $\sigma_S$. We do not require $Y_i$ and $\widehat Z_i$ to be independent. By the triangle inequality for standard deviations,
\begin{equation}
\operatorname{Var}(X_i)
\le
v_\alpha
:=
\min\left\{
\frac14,
\left(
\frac{\sigma_V+\alpha(\sigma_S+d_S)}
{1+\alpha}
\right)^2
\right\}.
\label{eq:fused_variance}
\end{equation}
The bound $B_\alpha$ controls how far the expected score can move because of critic error. The bound $v_\alpha$ controls how much the observed score can vary across rollouts.

\subsection{Finite-Time Analysis of Action Selection}

We use a UCB procedure only to study the consequence of these two bounds. UCB is not part of \ourmethod. Let
\begin{equation}
u_\star=\max_i u_i,
\qquad
\Delta_i=u_\star-u_i.
\end{equation}
For convenience, define
\begin{equation}
g_i=\Delta_i-2B_\alpha.
\label{eq:effective_gap}
\end{equation}
Here, $g_i$ is simply the remaining difference between action $i$ and the best action after allowing for critic bias on both scores.

\begin{theorem}[Bound on local action selection]
\label{thm:local_regret}
Let $N\ge K$, $\delta\in(0,1)$, and
\begin{equation}
L=\log\frac{2KN}{\delta}.
\end{equation}
Suppose that, during the part of training considered here, observations obtained from each action are i.i.d.\ and satisfy Eqs.~\eqref{eq:fused_bias}--\eqref{eq:fused_variance}. If $g_i>0$ for every suboptimal action, then with probability at least $1-\delta$,
\begin{equation}
R_N
\le
\sum_{i:\Delta_i>0}
\Delta_i
\left[
2
+
\frac{32v_\alpha L}{g_i^2}
+
\frac{8L}{3g_i}
\right],
\label{eq:local_regret_bound}
\end{equation}
where
\begin{equation}
R_N
=
\sum_{i:\Delta_i>0}
\Delta_iT_i(N).
\end{equation}
\end{theorem}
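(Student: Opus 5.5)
The plan is to run a UCB procedure whose confidence widths come from Bernstein's inequality with the known variance proxy $v_\alpha$. On a single high-probability event, each suboptimal action can then be selected only while its confidence width still exceeds its effective gap $g_i$. Two facts make this work. First, every observation $X_i$ lies in $[0,1]$, since $Y_i\in[0,1]$ and $\widehat Z_i\in[0,1]$ by Proposition~\ref{prop:success_gae}, and $X_i$ is a convex combination of the two. Second, Eqs.~\eqref{eq:fused_bias}--\eqref{eq:fused_variance} give $|\mathbb E[X_i]-u_i|\le B_\alpha$ and $\operatorname{Var}(X_i)\le v_\alpha$.

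Concretely, I would take the algorithm that first plays each action once. After that, at each round it plays the action maximizing
\begin{equation}
I_i(t)=\bar X_{i,n}+c_n,
\qquad
c_n=\sqrt{\frac{2v_\alpha L}{n}}+\frac{L}{3n},
\end{equation}
where $n=T_i(t)$ is the number of times action $i$ has been played so far.

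The first step is the concentration event. Fix an action $i$ and a count $n\le N$. Bernstein's inequality for i.i.d.\ variables in $[0,1]$ with variance at most $v_\alpha$ gives
\begin{equation}
\Pr\bigl(|\bar X_{i,n}-\mathbb E X_i|>c_n\bigr)\le 2e^{-L}=\frac{\delta}{KN}.
\end{equation}
Because the variance bound is used as given, the $\min\{1/4,\cdot\}$ in $v_\alpha$ only helps. I would state the inequality in its "fixed $n$" form and take a union bound over the $K$ actions and $N$ possible counts. This handles the random times at which counts are reached and yields an event $\mathcal E$ of probability at least $1-\delta$. I expect this to be the main technical obstacle. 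One must get the exact constants in the Bernstein deviation $\sqrt{2vL/n}+L/(3n)$ and the factor $2$ inside $L=\log(2KN/\delta)$ to line up with the stated bound. One must also be careful that the i.i.d.\ assumption is applied to the per-action sample sequences rather than to the adaptively chosen rounds; the standard "stack of rewards" construction resolves this.

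The second step works on $\mathcal E$. Let $\star$ be an optimal action, and suppose suboptimal $i$ is played at some round $t>K$ with $T_i(t)=n$. Then $I_i(t)\ge I_\star(t)$. On $\mathcal E$ we have $I_\star(t)\ge \mathbb E X_\star\ge u_\star-B_\alpha$, and $I_i(t)\le \mathbb E X_i+2c_n\le u_i+B_\alpha+2c_n$. Combining these gives $g_i=\Delta_i-2B_\alpha\le 2c_n$. Next I show that $2c_n<g_i$ once
\begin{equation}
n\ge n_i:=\frac{32v_\alpha L}{g_i^2}+\frac{8L}{3g_i}.
\end{equation}
For the first term, $n\ge 32v_\alpha L/g_i^2$ gives $2\sqrt{2v_\alpha L/n}\le g_i/2$. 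For the second, $n\ge 8L/(3g_i)$ gives $2L/(3n)\le g_i/4<g_i/2$. Hence action $i$ is played at most $\lceil n_i\rceil+1\le n_i+2$ times in total: one initial play, plus plays while $T_i<n_i$, plus rounding. This gives $T_i(N)\le 2+32v_\alpha L/g_i^2+8L/(3g_i)$.

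Finally, multiplying by $\Delta_i$ and summing over suboptimal actions yields Eq.~\eqref{eq:local_regret_bound} on $\mathcal E$, i.e.\ with probability at least $1-\delta$. The hypothesis $g_i>0$ is exactly what makes $n_i$ finite.
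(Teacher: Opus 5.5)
Your proposal is correct and follows essentially the same route as the paper: a single Bernstein-plus-union-bound event over all actions and sample counts, the UCB comparison that gives $g_i\le 2c_n$ once the bias $B_\alpha$ is charged to both the suboptimal and the optimal action, and a threshold on $n$ that forces $2c_n<g_i$. The differences are minor refinements: your width uses the sharper Bernstein term $L/(3n)$ where the paper uses $2L/(3m)$ (both are valid and both give the stated constants), and you make explicit two points the paper leaves implicit, namely the boundedness $X_i\in[0,1]$ and the stack-of-rewards handling of adaptively chosen actions.
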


\begin{proof}
Bernstein's inequality implies that, with probability at least $1-\delta$, the empirical mean for every action and every sample count up to $N$ lies within
\begin{equation}
r(m)
=
\sqrt{\frac{2v_\alpha L}{m}}
+
\frac{2L}{3m}
\end{equation}
of its expectation.

Suppose a suboptimal action $i$ is selected after it has been observed $m$ times. Its upper confidence bound must then be at least as large as that of the best action. Using Eq.~\eqref{eq:fused_bias} for both actions gives
\begin{equation}
\Delta_i
\le
2B_\alpha+2r(m),
\end{equation}
and therefore
\begin{equation}
g_i\le2r(m).
\end{equation}
However, if
\begin{equation}
m>
\max\left\{
\frac{32v_\alpha L}{g_i^2},
\frac{8L}{3g_i}
\right\},
\end{equation}
then each term in $r(m)$ is smaller than $g_i/4$, which gives $2r(m)<g_i$. This contradicts the condition required for action $i$ to be selected. Hence
\begin{equation}
T_i(N)
\le
2
+
\frac{32v_\alpha L}{g_i^2}
+
\frac{8L}{3g_i}.
\end{equation}
Multiplying by $\Delta_i$ and summing over all suboptimal actions gives Eq.~\eqref{eq:local_regret_bound}.
\end{proof}

The theorem shows why critic bias matters for action discrimination. If $\Delta_i\le2B_\alpha$, the critic error can be large enough to reverse the expected ordering of the two actions, even if more rollouts are collected.

\paragraph{When does success-aware credit help?}
To compare the combined score with return credit alone, consider the case in which both signals target success:
\begin{equation}
\nu_i=\mu_i
\qquad
\text{for all }i.
\end{equation}
If
\begin{equation}
d_S<\varepsilon_V,
\qquad
\sigma_S+d_S<\sigma_V,
\label{eq:success_better}
\end{equation}
then for every $\alpha>0$,
\begin{equation}
B_\alpha<\varepsilon_V
\end{equation}
and
\begin{equation}
\Delta_i-2B_\alpha
>
\Delta_i-2\varepsilon_V.
\end{equation}
The variance bound in Eq.~\eqref{eq:fused_variance} also does not increase. Thus, when the success signal has smaller error and lower variation than the return signal, adding it leaves a larger separation between the expected scores of different actions and tightens the bound in Theorem~\ref{thm:local_regret}.

\subsection{Connection to Reliability Metrics}
\label{app:reliability_metrics}

We finally relate the reliability statistics in \ourmethod to the error terms above. Let
\begin{equation}
p^\star(h)=\mathbb E[S\mid h],
\end{equation}
where $h$ is drawn from the distribution $\mathcal D$ of prefixes visited by the current policy. The population Brier score satisfies
\begin{equation}
\mathrm{BS}
=
\underbrace{
\mathbb E_{\mathcal D}
\left[
(P_\phi(h)-p^\star(h))^2
\right]
}_{\mathcal E_S}
+
\underbrace{
\mathbb E_{\mathcal D}
\left[
p^\star(h)(1-p^\star(h))
\right]
}_{\text{outcome uncertainty}}.
\label{eq:brier_decomp}
\end{equation}
Here, $\mathcal E_S$ is the mean-squared error in the predicted success probability. Since both terms are nonnegative,
\begin{equation}
\mathcal E_S\le\mathrm{BS}.
\end{equation}
A lower Brier score therefore gives a smaller upper bound on this prediction error.

We next relate this average error to prefixes reached after a particular action. Let $\mathcal D_{i,j}$ be the distribution of $h_j$ after choosing action $a_i$. Suppose
\begin{equation}
\frac{d\mathcal D_{i,j}}{d\mathcal D}
\le C.
\label{eq:coverage_condition}
\end{equation}
Then
\begin{align}
\mathbb E_{\mathcal D_{i,j}}
\left[
|P_\phi-p^\star|
\right]
&\le
\sqrt{
\mathbb E_{\mathcal D_{i,j}}
\left[
(P_\phi-p^\star)^2
\right]
}
\nonumber\\
&\le
\sqrt{C\mathcal E_S}
\le
\sqrt{C\,\mathrm{BS}}.
\label{eq:coverage_error}
\end{align}
Using the weights in Eq.~\eqref{eq:success_weighted}, we obtain
\begin{equation}
\left|
\mathbb E[\widehat Z_i]-\mu_i
\right|
\le
(1-\lambda^{H-1})
\sqrt{C\,\mathrm{BS}}.
\label{eq:brier_credit_bias}
\end{equation}
Thus, when $\mathcal D$ covers the prefixes reached after each candidate action, the Brier score also bounds the error in the expected success score.

Explained variance describes a different part of critic quality. It is defined as
\begin{equation}
\mathrm{EV}^S
=
1-
\frac{
\operatorname{Var}(S-P_\phi(h))
}{
\operatorname{Var}(S)+\varepsilon
}.
\label{eq:success_ev_theory}
\end{equation}
If
\begin{equation}
m=\mathbb E[S-P_\phi(h)],
\end{equation}
then
\begin{equation}
\operatorname{Var}(S-P_\phi(h))
=
\mathrm{BS}-m^2.
\end{equation}
The two statistics are therefore related, but they describe the error from different perspectives. Brier score measures squared prediction error, whereas explained variance measures how much residual variation remains relative to the variation in $S$.

For this reason, we use both quantities when computing $c_k=q_k^{\mathrm{BS}}q_k^{\mathrm{EV}}$. We do not claim that this product is an optimal weighting rule. Its role is to reduce the contribution of the success signal when the critic performs poorly according to either statistic. With $\alpha=\beta c_k$, the part of the bias contributed by the success score is at most
\begin{equation}
\frac{\beta c_k}{1+\beta c_k}d_S,
\end{equation}
which approaches zero as $c_k\rightarrow0$.

\paragraph{Summary.}
The analysis gives three results. First, Success-GAE estimates eventual success without changing its expectation when the Success Critic is exact, and its error remains bounded when the critic is approximate. Second, this error determines when the expected ordering of two actions is preserved. Third, if the success signal is more accurate and less variable than the return signal, adding it can improve the separation between actions and tighten the bound on local action selection.

\section{Details of the Benchmarks}\label{app:problem-def}

We evaluate \ourmethod on five benchmarks spanning household interaction, web navigation, scientific experimentation, function calling, and application-level tool use.

\textbf{ALFWorld.}
ALFWorld \citep{shridhar2021alfworld} is a text-based environment derived from ALFRED \citep{shridhar2020alfred} and TextWorld \citep{cote2018textworld}. An agent receives natural-language observations and must complete household tasks by navigating rooms and manipulating objects through textual commands. The benchmark includes tasks such as picking up, cleaning, heating, cooling, and examining objects. We use the standard benchmark split and report task success rate.

\textbf{WebShop.}
WebShop \citep{yao2022webshop} simulates online shopping over a large collection of real products. Given a natural-language instruction, an agent searches for products, navigates product pages, selects options, and attempts to purchase an item that satisfies the requested attributes. The environment provides a graded task score based on how well the selected product matches the instruction, together with a binary success signal. We report both task score and success rate.

\textbf{ScienceWorld.}
ScienceWorld \citep{wang2022scienceworld} contains interactive science tasks in which an agent performs experiments and manipulates objects through natural-language actions. The tasks cover topics from physics, chemistry, and biology and require several dependent steps before the goal can be completed. We evaluate across the benchmark tasks and report both task score and success rate.

\textbf{BFCL-v4.}
BFCL-v4 \citep{patil2025bfcl} evaluates the ability of language models to use external functions correctly. The agent must interpret a user request, choose appropriate functions, provide valid arguments, and use function outputs when further actions are required. This setting tests whether a policy can make reliable tool-use decisions over multiple interaction steps. We follow the benchmark evaluation protocol and report \textit{avg@4} together with \textit{pass@4}.

\textbf{AppWorld.}
AppWorld \citep{trivedi2024appworld} evaluates agents on tasks which interacts with APIs exposed by simulated applications. An agent must determine which applications and APIs are needed, execute sequences of calls, and use the returned information to complete the request. The tasks require dependent operations across applications, making successful completion sensitive to earlier decisions. We follow the standard evaluation protocol and report \textit{avg@4} together with \textit{pass@4}.

\section{Details of the Frameworks}
\label{app:framework_details}

Our experiments cover two forms of agent interaction and therefore use two established training frameworks. For ALFWorld, WebShop, and ScienceWorld, we build on verl-agent \citep{feng2025gigpo} and follow the environment implementation provided by BEACON \citep{wang2026beacon}. For BFCL-v4 and AppWorld, we use the environment and rollout setup released with AgentEvolver \citep{zhai2025agentevolver} and its SeeUPO implementation \citep{hu2026seeupo}. We keep the environment interface, interaction protocol, and evaluation procedure of the corresponding framework, while replacing the policy-optimization component when evaluating different RL methods.

\paragraph{Text-based interaction framework.}
ALFWorld, WebShop, and ScienceWorld are implemented with the interaction pipeline inherited from verl-agent. The framework separates language generation from environment execution: at each environment step, the model receives the task context and the latest environment observation, generates one complete action, and then passes this action to the environment. The next observation is returned only after the action has been executed. This interface matches the environment-step formulation used by \ourmethod, while the policy itself remains autoregressive at the token level. We use the environment implementations from the BEACON codebase for all three benchmarks. BEACON extends \texttt{verl-agent} with support for ScienceWorld while retaining the same interaction structure for ALFWorld and WebShop. The environment manager also records information produced at each transition, such as observations, executed actions, rewards, and episode termination. We use these trajectories to construct the return and success targets required by the two critic heads. No additional intermediate annotation is introduced by \ourmethod.

\paragraph{Interaction context.}
For the three text-based environments, the model input is constructed at every environment step rather than by treating the entire episode as one generated sequence. The input contains the task description, the current observation, a limited record of previous observations and actions, and the actions that are valid in the current state when the environment provides them. This follows the prompt and memory interface of verl-agent. The resulting structure keeps the correspondence between one model response and one environment action explicit.

The three environments use the same general interaction pattern, while adapting the prompt to the task and action space.

\begin{promptbox}[Prompt Template for ALFWorld]
You are an expert agent operating in the ALFRED Embodied Environment. Your task is to:
\textcolor{promptvar}{\{task\_description\}}.

Prior to this step, you have already taken
\textcolor{promptvar}{\{step\_count\}} step(s). Below are the most recent
\textcolor{promptvar}{\{history\_length\}} observations and the corresponding actions you took:
\textcolor{promptvar}{\{action\_history\}}.

You are now at step \textcolor{promptvar}{\{current\_step\}} and your current observation is:
\textcolor{promptvar}{\{current\_observation\}}.

Your admissible actions of the current situation are:
[\textcolor{promptvar}{\{admissible\_actions\}}].

\vspace{3pt}
Now it is your turn to take an action. You should first reason step-by-step about the current situation. This reasoning process MUST be enclosed within
\textcolor{prompttag}{\textless think\textgreater\ldots\textless/think\textgreater}
tags. Once you have finished your reasoning, choose an admissible action for the current step and present it within
\textcolor{prompttag}{\textless action\textgreater\ldots\textless/action\textgreater}
tags.
\end{promptbox}

\begin{promptbox}[Prompt Template for WebShop]
You are an expert autonomous agent operating in the WebShop e-commerce environment.

Your task is to:
\textcolor{promptvar}{\{task\_description\}}.

Prior to this step, you have already taken
\textcolor{promptvar}{\{step\_count\}} step(s). Below are the most recent
\textcolor{promptvar}{\{history\_length\}} observations and the corresponding actions you took:
\textcolor{promptvar}{\{action\_history\}}.

You are now at step \textcolor{promptvar}{\{current\_step\}} and your current observation is:
\textcolor{promptvar}{\{current\_observation\}}.

Your admissible actions for the current situation are:
[\textcolor{promptvar}{\{available\_actions\}}].

\vspace{3pt}
Now it is your turn to take one action for the current step. You should first reason step-by-step about the current situation and decide which admissible action best advances the shopping goal. This reasoning process MUST be enclosed within
\textcolor{prompttag}{\textless think\textgreater\ldots\textless/think\textgreater}
tags. Once you have finished your reasoning, choose an admissible action and present it within
\textcolor{prompttag}{\textless action\textgreater\ldots\textless/action\textgreater}
tags.
\end{promptbox}

\begin{promptbox}[Prompt Template for ScienceWorld]
You are an expert autonomous agent operating in the ScienceWorld environment, a text-based environment centered on tasks from the elementary science curriculum.

Your task is to:
\textcolor{promptvar}{\{task\_description\}}.

Prior to this step, you have already taken
\textcolor{promptvar}{\{step\_count\}} step(s). Below are the most recent
\textcolor{promptvar}{\{history\_length\}} observations and the corresponding actions you took:
\textcolor{promptvar}{\{action\_history\}}.

You are now at step \textcolor{promptvar}{\{current\_step\}} and your current observation is:
\textcolor{promptvar}{\{current\_observation\}}.

Your admissible actions for the current situation are:
[\textcolor{promptvar}{\{available\_actions\}}].

\vspace{3pt}
Now it is your turn to take one action for the current step. You should first reason step-by-step about the current situation and decide which admissible action best advances the task goal. This reasoning process MUST be enclosed within
\textcolor{prompttag}{\textless think\textgreater\ldots\textless/think\textgreater}
tags. Once you have finished your reasoning, choose an admissible action and present it within
\textcolor{prompttag}{\textless action\textgreater\ldots\textless/action\textgreater}
tags.
\end{promptbox}

The environment-specific templates differ mainly in the description of the task. ALFWorld presents household manipulation commands \citep{shridhar2021alfworld}, WebShop presents the shopping goal and currently available page actions \citep{yao2022webshop}, and ScienceWorld presents the scientific task together with the actions available in the current environment state \citep{wang2022scienceworld}. We retain these prompt conventions across all compared RL methods.

\paragraph{Tool-use interaction framework.}
For BFCL-v4 and AppWorld, we follow the training setup in the SeeUPO branch of AgentEvolver \citep{hu2026seeupo}. In this framework, the RL trainer and the benchmark environment communicate through an environment service. The trainer sends the current interaction state and the generated action to this service, which executes the requested function or code and returns the resulting observation. This separation allows the same policy-training pipeline to interact with both benchmarks without embedding their execution logic in the optimizer.

Following the reference setting, BFCL-v4 and AppWorld use asynchronous vLLM rollouts with at most 10 interaction steps, eight rollouts per prompt, and temperature $0.9$. We retain the framework's linear think context template and keep the rollout protocol fixed across backbones.

For BFCL-v4 \citep{patil2025bfcl}, we use the multi-turn data and environment implementation in AgentEvolver. At each turn, the model receives the user request and currently available function definitions, generates a function call, and uses the returned execution result in the next turn. The implementation follows the Qwen function-call format shown below.

\begin{promptbox}[Prompt Template for BFCL-v4]
\textbf{\# Tools}

You may call one or more functions to assist with the user query.

You are provided with function signatures within
\textcolor{prompttag}{\textless tools\textgreater\ldots\textless/tools\textgreater}
tags:

\textcolor{prompttag}{\textless tools\textgreater}

\textcolor{promptvar}{\{function definitions\}}

\textcolor{prompttag}{\textless/tools\textgreater}

\vspace{3pt}
For each function call, return a JSON object containing the function name and arguments within
\textcolor{prompttag}{\textless tool\_call\textgreater\ldots\textless/tool\_call\textgreater}
tags:

\textcolor{prompttag}{\textless tool\_call\textgreater}

\{"name": \textcolor{promptvar}{\{function name\}},
"arguments": \textcolor{promptvar}{\{arguments\}}\}

\textcolor{prompttag}{\textless/tool\_call\textgreater}
\end{promptbox}

where the available functions may change across turns. We preserve the tool definitions and execution results produced by BFCL rather than adding task-specific instructions for \ourmethod. Evaluation uses the BFCL evaluation routines provided by the framework.

For AppWorld \citep{trivedi2024appworld}, we use the official environment and evaluator wrapped by AgentEvolver. The model can inspect available applications and API documentation, execute Python code through the provided APIs, and use each execution result in subsequent steps. The interaction follows the prompt structure below.

\begin{promptbox}[Prompt Template for AppWorld]
You are an AI assistant whose task is to complete the user's request by interacting with applications through their APIs.

You will undertake a multi-step interaction using a Python execution environment. At each step, write Python code, observe the returned output, and use that information to decide the next action.

\vspace{3pt}
To inspect the available applications:

\texttt{print(apis.api\_docs.show\_app\_descriptions())}

\vspace{3pt}
To inspect the APIs provided by an application:

\texttt{print(apis.api\_docs.show\_api\_descriptions(}\\
\texttt{\hspace*{1em}app\_name=\textcolor{promptvar}{\{app name\}}))}

\vspace{3pt}
To inspect the specification of a particular API:

\texttt{print(apis.api\_docs.show\_api\_doc(}\\
\texttt{\hspace*{1em}app\_name=\textcolor{promptvar}{\{app name\}},}\\
\texttt{\hspace*{1em}api\_name=\textcolor{promptvar}{\{api name\}}))}

\vspace{3pt}
Each code execution produces an output that can be used in subsequent steps.

Task:
\textcolor{promptvar}{\{user request\}}
\end{promptbox}

This interaction continues until the task is completed or the maximum number of environment steps is reached. We use the task evaluator supplied by AppWorld to determine completion and obtain the final outcome.

\paragraph{Evaluation protocol.}
We follow the evaluation procedure provided by the corresponding benchmark framework. Table~\ref{tab:evaluation_protocol} summarizes the task score, success criterion, and metric used for each benchmark.

\begin{table*}[t]
\centering
\small
\caption{\textbf{Evaluation protocol across the five benchmarks.} We summarize the task reward or score, the criterion for a successful trajectory, and the reported evaluation metrics.}
\label{tab:evaluation_protocol}
\resizebox{\textwidth}{!}{
\begin{tabular}{lccc}
\toprule
\textbf{Benchmark}
& \textbf{Task Reward / Score}
& \textbf{Success Criterion}
& \textbf{Evaluation Metric} \\
\midrule
ALFWorld
& Environment task completion
& $\texttt{info["won"]}=\texttt{True}$
& Success Rate \\

WebShop
& $q_i\in[0,1]$
& Terminal and $q_i=1$
& Task Score / Success Rate \\

ScienceWorld
& Final score $z_i$
& Terminal and $z_i>0$
& Task Score / Success Rate \\

AppWorld
& $\displaystyle r_i=\frac{P_i}{P_i+F_i}$
& $r_i=1$
& $\mathrm{Avg@}4$ / $\mathrm{Pass@}4$ \\

BFCL-v4 MT
& Binary correctness $(0/1)$
& Official evaluator returns correct
& $\mathrm{Avg@}4$ / $\mathrm{Pass@}4$ \\
\bottomrule
\end{tabular}
}
\end{table*}

For ALFWorld, WebShop, and ScienceWorld, each task is evaluated once using the environment interface inherited from verl-agent. Let $s_i^{(b)}\in\{0,1\}$ indicate whether task $i$ is successful on benchmark $b$. The success rate is
\begin{equation}
\mathrm{SR}^{(b)}
=
\frac{1}{N_b}
\sum_{i=1}^{N_b}
s_i^{(b)}.
\end{equation}
For ALFWorld, success is given directly by the environment completion flag. For WebShop, a trajectory is successful only when the episode terminates with task reward $1$, while the task score is the average continuous environment reward in $[0,1]$. For ScienceWorld, we follow the legacy BEACON\citep{wang2026beacon} criterion, where \texttt{won = isCompleted and score > 0}.
Since \texttt{isCompleted} may also be triggered by the environment step limit, \emph{Succ} is a loose benchmark-specific success metric rather than strict full-task completion. This criterion is applied identically to all methods and baselines, and the Success Critic on ScienceWorld predicts this benchmark-defined binary outcome.

For BFCL-v4 and AppWorld, we follow the SeeUPO setup and sample $K=4$ rollouts for each task. Let $r_{i,j}^{(b)}\in[0,1]$ denote the reward of rollout $j$ for task $i$. We define
\begin{equation}
\mathrm{Avg@}K
=
\frac{1}{N_bK}
\sum_{i=1}^{N_b}
\sum_{j=1}^{K}
r_{i,j}^{(b)}.
\end{equation}
A rollout is counted as fully successful when $r_{i,j}^{(b)}=1$. In AppWorld, the reward is the fraction of passed goal checks, so partial completion contributes to \textit{avg@4} but does not count as full success. In BFCL-v4 Multi-Turn Base, the reward is binary correctness, so \textit{avg@4} is equivalent to the success rate over all sampled rollouts. We additionally define
\begin{equation}
\mathrm{Pass@}K
=
\frac{1}{N_b}
\sum_{i=1}^{N_b}
\mathbf{1}
\left[
\exists j,\;
r_{i,j}^{(b)}=1
\right],
\end{equation}
which measures the fraction of tasks solved at least once among the $K$ sampled rollouts. Unless otherwise stated, the main results report success rate for ALFWorld, task score and success rate for WebShop and ScienceWorld, and \textit{avg@4} for BFCL-v4 and AppWorld.

\paragraph{Consistency across methods.}
Within each benchmark and backbone, all RL methods use the same environment implementation, prompt format, rollout interface, and evaluation procedure. The differences between methods are restricted to their learning objectives and the quantities required by those objectives. In particular, \ourmethod does not introduce additional environment feedback or privileged intermediate supervision. Its Success Critic is trained from the same terminal task outcome that is available in the collected trajectories.

\section{Training Procedure of \ourmethod}
\label{app:algorithm}

This section summarizes the training procedure of \ourmethod. At each iteration, the current policy collects trajectories, the dual-head critic learns return and success prediction from the same rollout data, and the resulting return- and success-based advantages are combined according to Success Critic reliability before the standard token-level PPO update. Algorithm~\ref{alg:scale} gives the complete procedure.

\begin{algorithm}[H]
\caption{Training procedure of \ourmethod.}
\label{alg:scale}
\begin{algorithmic}[1]
\Require Policy $\pi_\theta$, return critic $V_\psi$, Success Critic $P_\phi$
\Require $\gamma,\lambda,\lambda_S,\beta$

\For{training iteration $k=1,2,\ldots$}

    \State Collect trajectories $\mathcal{B}_k$ using $\pi_\theta$
    \State Obtain return targets $\hat G_t$ and task outcomes $S$

    \State Cache pre-update predictions
    $\displaystyle
    v_t^{\mathrm{pre}}=V_\psi(h_t),\quad
    p_t^{\mathrm{pre}}=P_\phi(h_t)
    $
    for all prefixes in $\mathcal{B}_k$

    \State Compute $\mathrm{BS}_k$ and $\mathrm{EV}_k^S$
    from $\{p_t^{\mathrm{pre}}\}$ on $\mathcal{B}_k$
    \State Compute reliability weight
    $\displaystyle c_k=q_k^{\mathrm{BS}}q_k^{\mathrm{EV}}$

    \For{each trajectory $\tau\in\mathcal{B}_k$}
        \State Compute return advantages $\hat A_t^V$
        from $\{v_t^{\mathrm{pre}}\}$ using Eq.~\eqref{eq:return_gae}

        \State Set $p_t=p_t^{\mathrm{pre}}$ and anchor $p_T=S$
        \State Compute $\delta_t^S=p_{t+1}-p_t$
        \State Compute Success-GAE $\hat A_t^S$
        using Eq.~\eqref{eq:success_gae}

        \State Combine credits:
        $\displaystyle
        \hat A_t=\hat A_t^V+\beta c_k\hat A_t^S
        $

        \State Assign $\hat A_t$ to all tokens in action $a_t$
    \EndFor

    \State Update $V_\psi$ and $P_\phi$
    using Eq.~\eqref{eq:critic_objective}

    \State Update $\pi_\theta$
    using Eq.~\eqref{eq:actor_objective}

\EndFor
\end{algorithmic}
\end{algorithm}

Thus, \ourmethod leaves the policy objective unchanged and modifies only how environment-step credit is estimated, calibrated, and integrated before the PPO update. This preserves the standard token-level optimization procedure while allowing the actor to learn from credit signals that more directly reflect eventual task success.

\section{Case Studies of \ourmethod}
\label{app:case_study}

To complement the quantitative results in the main paper, we present two successful trajectories from ScienceWorld and BFCL-v4. These examples show why credit assignment is difficult in long-horizon tasks. An action may have a simple immediate effect, while its importance becomes clear only after later interactions. In ScienceWorld, the agent must complete several task requirements in the correct order. In BFCL-v4, actions in early turns change the filesystem used in later turns. We report the environment observations, actions, scores, user requests, and tool outputs.


\subsection{ScienceWorld: Ordered Subgoals and Delayed Task Completion}
\label{app:case_scienceworld}

We first show a successful \textit{find-living-thing} trajectory from ScienceWorld \citep{wang2022scienceworld}. The task contains several ordered requirements. The agent must first focus on a living thing and then move it to the orange box in the kitchen. As a result, the value of an early action depends on what the agent does in later steps. The trajectory below reports the actions, observations, and native ScienceWorld scores.

\begin{envstep}{Initial State}

\textbf{Task:}

Find a living thing. First, focus on the thing. Then, move it to the orange box in the kitchen.

\end{envstep}


\begin{agentstep}{Step 1}

\casethink{I first need to identify a living thing in the current environment. I do not yet know what objects are available or where the target might be. The task also requires me to focus on the object before moving it, so I should first inspect the surroundings and then choose a suitable target.}

\medskip

\caseaction{look around}

\end{agentstep}

\begin{envstep}{Step 1}

\textbf{Score:} 0.08.

\medskip

This location is called the outside. You see a blue jay egg, a butterfly egg, a dove egg, an axe, and a door to the kitchen.

\end{envstep}


\begin{agentstep}{Step 2}

\casethink{I can now see several eggs, which are possible living things for this task. The blue jay egg is a clear candidate, so I will use it as the target. Since the task explicitly requires focusing on the object before moving it, I should complete that requirement first.}

\medskip

\caseaction{focus on blue jay}

\end{agentstep}

\begin{envstep}{Step 2}

\textbf{Score:} 0.58.

\medskip

You focus on the blue jay egg.

\end{envstep}


\begin{agentstep}{Step 3}

\casethink{The blue jay has now been selected and focused on, so the first requirement is satisfied. The final goal is to move it to the orange box in the kitchen. Before I can travel to the destination, I need to take the object with me, so picking it up is the next necessary step.}

\medskip

\caseaction{pick up blue jay}

\end{agentstep}

\begin{envstep}{Step 3}

\textbf{Score:} 0.67.

\medskip

You move the blue jay to the inventory.

\end{envstep}


\begin{agentstep}{Step 4}

\casethink{The blue jay is now in my inventory, which means I can carry it to the required destination. The task says that the orange box is located in the kitchen, while I am currently outside. I should therefore move to the kitchen and look for the orange box there.}

\medskip

\caseaction{go to kitchen}

\end{agentstep}

\begin{envstep}{Step 4}

\textbf{Score:} 0.83.

\medskip

You move to the kitchen. You see a counter, a table, and an orange box.

\end{envstep}


\begin{agentstep}{Step 5}

\casethink{I have reached the kitchen and can see the orange box. I am already carrying the blue jay, and the earlier focusing requirement has been completed. The only remaining part of the task is to place the target into the orange box, which should complete the task.}

\medskip

\caseaction{move blue jay to orange box}

\end{agentstep}

\begin{envstep}{Outcome}

\textbf{Score:} 1.00.

\medskip

You move the blue jay to the orange box.

\medskip

\textbf{Task Completed.}

\end{envstep}

\paragraph{Credit-assignment perspective.}
The focus action is important because it satisfies a required part of the task before the agent moves the object. Its effect cannot be judged from that step alone. The agent must still pick up the blue jay, move to the kitchen, and place it in the orange box. This example shows why an intermediate action should be evaluated by how it affects the chance of completing the full task. In SCALE, the Success Value estimates this probability from each interaction prefix, and changes in the Success Value provide the signal used by Success-GAE.


\subsection{BFCL-v4: Cross-Turn Credit under Persistent Tool State}
\label{app:case_bfcl}

We next show the official BFCL-v4 Multi-Turn Base instance \texttt{multi\_turn\_base\_0} \citep{patil2025bfcl}, using the ground-truth interaction driver in AgentEvolver. The task contains four user turns, and the filesystem is shared across them. Actions in one turn can therefore affect what is possible in later turns. For example, moving a file in the first turn changes the location used by later search, sorting, and comparison operations. Multiple filesystem calls may be made within each turn.

\begin{envstep}{Initial State}

The initial \texttt{GorillaFileSystem} contains

\medskip

{\ttfamily\small
workspace/document/final\_report.pdf
}

with content

\begin{quote}
\ttfamily\small
Year2024 This is the final report content including budget analysis and other sections.
\end{quote}

and

{\ttfamily\small
workspace/document/previous\_report.pdf
}

with content

\begin{quote}
\ttfamily\small
Year203 This is the previous report content with different budget analysis.
\end{quote}

\end{envstep}


\begin{envstep}{Turn 1}

\textbf{User request:}

Move \texttt{'final\_report.pdf'} within document directory to \texttt{'temp'} directory in document. Make sure to create the directory.

\end{envstep}

\begin{agentstep}{Turn 1}

\toolopen

{\ttfamily\small
\{"name": "cd", "arguments": \{"folder": "document"\}\}
}

\toolclose

\medskip

\toolopen

{\ttfamily\small
\{"name": "mkdir", "arguments": \{"dir\_name": "temp"\}\}
}

\toolclose

\medskip

\toolopen

{\ttfamily\small
\{"name": "mv", "arguments": \{"source": "final\_report.pdf",\\
"destination": "temp"\}\}
}

\toolclose

\end{agentstep}

\begin{envstep}{Turn 1: Tool Results}

\toolopen

{\ttfamily\small
\{"name": "cd\_1", "content":\\
\{"current\_working\_directory": "document"\}\}
}

\toolclose

\medskip

\toolopen

{\ttfamily\small
\{"name": "mkdir\_2", "content": "None"\}
}

\toolclose

\medskip

\toolopen

{\ttfamily\small
\{"name": "mv\_3", "content":\\
\{"result": "'final\_report.pdf' moved to\\
'temp/final\_report.pdf'"\}\}
}

\toolclose

\end{envstep}

\begin{agentstep}{Turn 1: Advance}

{\ttfamily\small
ok.1
}

\end{agentstep}


\begin{envstep}{Turn 2}

\textbf{User request:}

Perform a detailed search using grep to identify sections in the file pertaining to \texttt{'budget analysis'}.

\end{envstep}

\begin{agentstep}{Turn 2}

\toolopen

{\ttfamily\small
\{"name": "cd", "arguments": \{"folder": "temp"\}\}
}

\toolclose

\medskip

\toolopen

{\ttfamily\small
\{"name": "grep", "arguments":\\
\{"file\_name": "final\_report.pdf",\\
"pattern": "budget analysis"\}\}
}

\toolclose

\end{agentstep}

\begin{envstep}{Turn 2: Tool Results}

\toolopen

{\ttfamily\small
\{"name": "cd\_1", "content":\\
\{"current\_working\_directory": "temp"\}\}
}

\toolclose

\medskip

\toolopen

{\ttfamily\small
\{"name": "grep\_2", "content":\\
\{"matching\_lines": [\\
"Year2024 This is the final report content including budget analysis and other sections."\\
]\}\}
}

\toolclose

\end{envstep}

\begin{agentstep}{Turn 2: Advance}

{\ttfamily\small
ok.2
}

\end{agentstep}


\begin{envstep}{Turn 3}

\textbf{User request:}

Upon identifying the requisite \texttt{'budget analysis'} content, sort the \texttt{'final\_report.pdf'} by line for improved clarity and comprehension.

\end{envstep}

\begin{agentstep}{Turn 3}

\toolopen

{\ttfamily\small
\{"name": "sort", "arguments":\\
\{"file\_name": "final\_report.pdf"\}\}
}

\toolclose

\end{agentstep}

\begin{envstep}{Turn 3: Tool Result}

\toolopen

{\ttfamily\small
\{"name": "sort\_1", "content":\\
\{"sorted\_content":\\
"Year2024 This is the final report content including budget analysis and other sections."\}\}
}

\toolclose

\medskip

The BFCL filesystem implementation returns the sorted content; the \texttt{sort} operation does not overwrite the file.

\end{envstep}

\begin{agentstep}{Turn 3: Advance}

{\ttfamily\small
ok.2
}

\end{agentstep}


\begin{envstep}{Turn 4}

\textbf{User request:}

Move \texttt{'previous\_report.pdf'} in document directory to temp as well and having final report also there, proceed to juxtapose it with \texttt{'previous\_report.pdf'} to detect any critical alterations.

\end{envstep}

\begin{agentstep}{Turn 4}

\toolopen

{\ttfamily\small
\{"name": "cd", "arguments": \{"folder": ".."\}\}
}

\toolclose

\medskip

\toolopen

{\ttfamily\small
\{"name": "mv", "arguments":\\
\{"source": "previous\_report.pdf",\\
"destination": "temp"\}\}
}

\toolclose

\medskip

\toolopen

{\ttfamily\small
\{"name": "cd", "arguments": \{"folder": "temp"\}\}
}

\toolclose

\medskip

\toolopen

{\ttfamily\small
\{"name": "diff", "arguments":\\
\{"file\_name1": "final\_report.pdf",\\
"file\_name2": "previous\_report.pdf"\}\}
}

\toolclose

\end{agentstep}

\begin{envstep}{Turn 4: Tool Results}

\toolopen

{\ttfamily\small
\{"name": "cd\_1", "content": \{\}\}
}

\toolclose

\medskip

\toolopen

{\ttfamily\small
\{"name": "mv\_2", "content":\\
\{"result": "'previous\_report.pdf' moved to\\
'temp/previous\_report.pdf'"\}\}
}

\toolclose

\medskip

\toolopen

{\ttfamily\small
\{"name": "cd\_3", "content":\\
\{"current\_working\_directory": "temp"\}\}
}

\toolclose

\medskip

\toolopen

{\ttfamily\small
\{"name": "diff\_4", "content":\\
\{"diff\_lines":\\
"- Year2024 This is the final report content including budget analysis and other sections.\\
+ Year203 This is the previous report content with different budget analysis."\}\}
}

\toolclose

\end{envstep}

\begin{agentstep}{Turn 4: Advance}

{\ttfamily\small
ok.2
}

\end{agentstep}

\begin{envstep}{Outcome}

{\ttfamily\small
[CONVERSATION\_COMPLETED]
}

\medskip

The four per-turn function-call groups correspond to the reference ground-truth interaction for \texttt{multi\_turn\_base\_0}. AgentEvolver subsequently invokes the BFCL sparse evaluator on the completed conversation.

\end{envstep}

\paragraph{Credit-assignment perspective.}
The \texttt{mkdir} and \texttt{mv} calls in the first turn prepare the filesystem for the later requests. After \texttt{final\_report.pdf} is moved to \texttt{temp}, later turns access the same file with \texttt{grep} and \texttt{sort}. The file is also used in the final \texttt{diff} operation. The importance of the first-turn actions therefore becomes clear only after several later turns. This example shows that credit for an early tool call may depend on whether the state it creates supports successful completion of later requests.

\paragraph{Discussion.}
The two examples show two common sources of delayed credit. In ScienceWorld, an early action satisfies one requirement, but the task succeeds only after several later actions. In BFCL-v4, an early tool call changes the filesystem used in later turns. In both cases, the effect of an action cannot be judged only from its immediate result. It also depends on whether the resulting interaction prefix makes eventual task completion more likely. This is the motivation for using Success Value in SCALE and for using its changes to construct Success-GAE.

\end{document}